\documentclass{article} % For LaTeX2e
\usepackage{iclr2026_conference,times}
\usepackage{amsmath,amsfonts,bm}

\def\eqref#1{equation~\ref{#1}}
\def\1{\bm{1}}

\DeclareMathAlphabet{\mathsfit}{\encodingdefault}{\sfdefault}{m}{sl}
\SetMathAlphabet{\mathsfit}{bold}{\encodingdefault}{\sfdefault}{bx}{n}

\usepackage{amsmath}
\usepackage{amssymb}
\usepackage{mathtools}
\usepackage{amsthm}
\usepackage{algorithm}
\usepackage[endLComment=,italicComments=false]{algpseudocodex}
\usepackage[utf8]{inputenc} % allow utf-8 input
\usepackage[T1]{fontenc}    % use 8-bit T1 fonts
\DeclareTextFontCommand{\texttt}{\fontfamily{cmtt}\selectfont}
\usepackage{hyperref}       % hyperlinks
\usepackage{url}            % simple URL typesetting
\usepackage{booktabs}       % professional-quality tables
\usepackage{amsfonts}       % blackboard math symbols
\usepackage{nicefrac}       % compact symbols for 1/2, etc.
\usepackage{microtype}      % microtypography
\usepackage{float}
\usepackage{enumitem}
\usepackage[font=small]{caption}
\usepackage{amsmath}
\usepackage{mathtools}
\usepackage{textcomp}
\usepackage{multirow}
\usepackage{booktabs}
\usepackage{boldline}
\usepackage{tcolorbox}
\usepackage[dvipsnames]{xcolor}
\usepackage[table,xcdraw,dvipsnames]{xcolor}
\usepackage{wrapfig}
\usepackage{threeparttable}
\usepackage[thicklines]{cancel}
\usepackage{setspace}
\usepackage{amsthm}
\usepackage{natbib}
\usepackage{cleveref}
\usepackage{makecell}
\usepackage{tablefootnote}
\setcitestyle{authoryear,round}
\usepackage{subcaption}  
\usepackage{pythonhighlight}
\usepackage{mathtools}

\usepackage{pgfplots}
\pgfplotsset{compat=1.18,
every axis/.append style={
    scale only axis,
    enlargelimits=false,
    ylabel near ticks,
    xlabel near ticks,
    axis line style={draw=none},
    title style={yshift=-8pt, font=\scriptsize},
    label style={font=\scriptsize},
    ylabel style={yshift=-4pt},
    xlabel style={yshift=4pt},
    tick label style={font=\tiny},
    x tick label style={yshift=2pt},
    y tick label style={xshift=2pt},
    ytick style={draw=none},
    ytick align=inside,
    }
}
\usepackage{tikz}
\usetikzlibrary{positioning,shapes,fit,calc,arrows.meta,patterns}

\theoremstyle{plain}
\newtheorem{theorem}{Theorem}

\theoremstyle{definition}

\theoremstyle{remark}

\newcommand{\methodname}{VGF}
\definecolor{mycolor}{rgb}{0.29, 0.30, 0.64}

\definecolor{sb_blue}{RGB}{31,119,180}
\definecolor{sb_orange}{RGB}{255,127,14}
\definecolor{sb_green}{RGB}{44,160,44}
\definecolor{sb_red}{RGB}{214,39,40}
\definecolor{sb_purple}{RGB}{148,103,189}
\definecolor{sb_brown}{RGB}{140,86,75}
\definecolor{sb_pink}{RGB}{227,119,194}
\definecolor{sb_gray}{RGB}{127,127,127}
\definecolor{sb_yellow}{HTML}{bcbd22}
\definecolor{sb_cyan}{RGB}{23,190,207}

\definecolor{light_gray}{RGB}{242,242,242}
\definecolor{light_mycolor}{RGB}{228,223,240}  % light version of (74,77,163)
\hypersetup{colorlinks=true,urlcolor=mycolor,citecolor=mycolor,linkcolor=mycolor}

\title{\resizebox{\textwidth}{!}{Reinforcement Learning via Value Gradient Flow}}

\author{%
Haoran Xu\thanks{Equal contribution.}\:\:$^{1}$ \quad Kaiwen Hu\footnotemark[1]\:\:$^{2}$ \quad Somayeh Sojoudi$^{2}$ \quad Amy Zhang$^{1}$ \\
$^{1}$University of Texas at Austin \quad $^{2}$University of California, Berkeley  \\
}

\iclrfinalcopy % Uncomment for camera-ready version, but NOT for submission.
\begin{document}

\maketitle

\begin{abstract}
We study behavior-regularized reinforcement learning (RL), where regularization toward a reference distribution (the dataset in offline RL or the base model in LLM RL finetuning) is essential to prevent value over-optimization caused by erroneous out-of-distribution extrapolation. Existing methods either rely on reparameterized policy gradient, which are difficult to scale to large generative models, or on reject sampling, which can be overly conservative when attempting to move beyond the behavior support. In this paper, we propose Value Gradient Flow (VGF), a scalable new paradigm for behavior-regularized RL. VGF casts behavior-regularized RL as an optimal transport problem that maps the reference distribution to the value-induced optimal policy distribution. We solve this transport problem via discrete gradient flow, where value gradients guide particles initialized from the reference distribution. Our analysis shows that VGF imposes regularization implicitly by controlling the transport budget. VGF eliminates explicit policy parameterization while remaining expressive and flexible, this enables adaptive test-time scaling by adjusting the transport budget. Extensive experiments demonstrate that VGF significantly outperforms prior methods, achieving state-of-the-art results on offline RL benchmarks (D4RL, OGBench) and LLM RL tasks.

\textbf{Code and runs can be found at \url{https://ryanxhr.github.io/vgf}}
\end{abstract}

\section{Introduction}

\begin{figure}[h]
\centering
\includegraphics[width=1.0\textwidth]{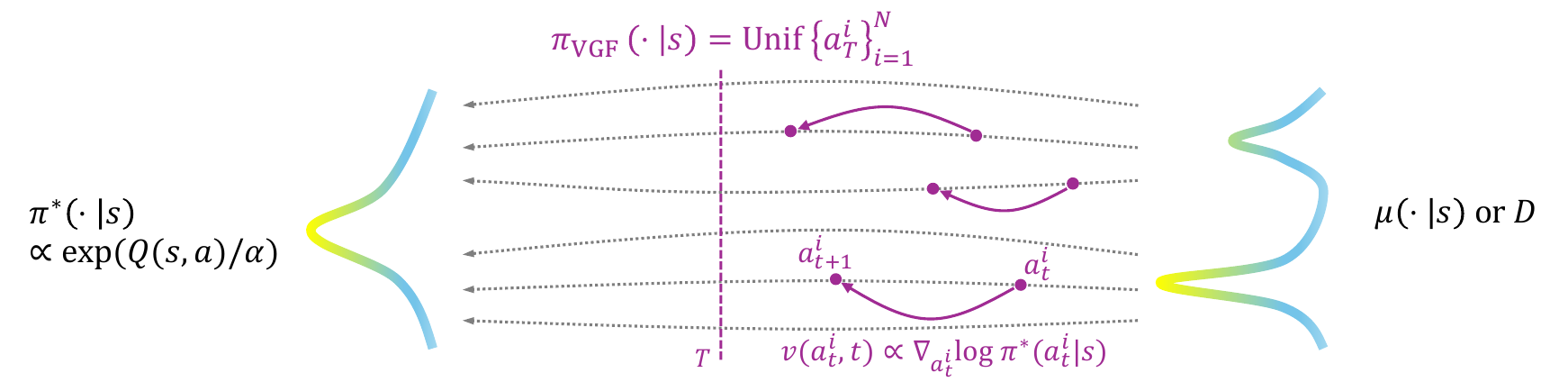}
\caption{\footnotesize 
\textbf{VGF: Value Gradient Flow}. 
VGF reframes behavior-regularized RL as an optimal transport from the behavior distribution towards the Boltzmann value distribution, with the transport budget as implicit regularization. This scales to large generative models and enables adaptive test-time scaling.
}
\label{vgf_illustration}
\end{figure}

% Reinforcement learning (RL) has achieved impressive results across a wide range of domains, from playing complex games like Go~\citep{mnih2013playing,silver2017mastering} to robotic manipulation tasks~\citep{levine2016end}. 
% However, we need a behavior regularization (i.e., KL divergence) in many cases, such as online RL, offline RL, RLHF. 
% For example, in safety-critical or high-cost domains, such as healthcare~\citep{gottesman2018evaluating} and industrial control~\citep{zhan2022deepthermal,zhan2025data}, direct online exploration can be prohibitively risky or expensive, necessitating offline RL approaches that leverage pre-collected datasets without further environment interaction. Recent advances in offline RL have introduced conservative algorithms~\citep{kumar2020conservative,fujimoto2019off} and regularization-based methods~\citep{wu2019behavior,xu2023offline} designed specifically to mitigate extrapolation errors arising from dataset distribution shifts. 
Reinforcement learning (RL) has provided a powerful framework for solving sequential decision-making problems in complex environments. These methods have been successfully applied in diverse domains, ranging from robotics~\citep{levine2016end} to game playing~\citep{mnih2013playing,silver2017mastering}, and have recently become instrumental in fine-tuning large language models (LLMs) to align with human preferences and instructions~\citep{ouyang2022training} and enhancing the reasoning capabilities of LLMs~\citep{shao2024reasoning,guo2025large}. While these successes highlight the broad potential of RL, they also expose a key challenge: policies must often be regularized toward a reference distribution to remain stable and reliable. This challenge arises in both offline RL, where erroneous extrapolation beyond fixed datasets can cause severe value overestimation, and RLHF, where deviating too far from the supervised policy risks reward hacking. In these settings, na\"ive value maximization alone is insufficient. 
% Thus, behavior regularization serves as a crucial mechanism to ensure learning occurs only within the support of trusted behaviors. 
As a result, recent research in both offline RL~\citep{kumar2020conservative,fujimoto2019off,wu2019behavior,xu2023offline} and RLHF~\citep{ouyang2022training,zheng2023reward} has increasingly converged on the paradigm of \textit{behavior-regularized RL}, which balances value maximization with adherence to reliable reference distributions.

The most common approach to behavior-regularized RL is to add explicit divergence or distance penalties (e.g., L2 distance or KL divergence) to the RL learning objective~\citep{ouyang2022training,touvron2023llama,gao2023scaling}.
% This approach restricts policy updates to remain close to the reference distribution, but it introduces several limitations. 
While this constrains policies to remain close to the reference distribution, it also introduces several limitations. 
First, explicit penalty methods typically use a single coefficient to regularize both value learning and policy improvement, even though these two components can prefer different regularization strengths. Tying them together makes the penalty coefficient difficult to tune and can lead to either overly conservative updates or insufficient control of out of distribution drift~\citep{wu2025invisible,korbak2022rl}.
% First, jointly optimizing for reward maximization and distributional proximity creates an optimization conflict, often leading to unstable training and overly conservative policies (e.g., stay within the support of the reference distribution~\citep{wu2025invisible,korbak2022rl}). 
Second, scaling these methods to expressive generative policies such as diffusion~\citep{song2020score,ho2020denoising} and flow models~\citep{lipman2022flow} is challenging. Computing policy gradient requires differentiating through multi step sampling procedures, which is unstable and computationally expensive, while distillation the multi step policy into a one step model can sacrifice expressivity~\citep{park2025flow}.

\noindent \textbf{Behavior-regularized RL without regularization.\footnote{Although \methodname{} uses transport budget as an implicit regularization, here we use "without regularization" to emphassize that VGF doesn't involve any auxiliary regularization during optimization.}} \
We introduce \textbf{Value Gradient Flow (\methodname{})}, which casts behavior-regularized RL as the optimal transport from an (estimated) reference distribution to the optimal policy distribution induced by the value function. 
% where we find the transport budget naturally serves as an implicit behavior regularization.
% which reframes behavior regularization as a bounded transport from an (estimated) reference distribution to regions preferred by the learned value function. 
VGF adds no explicit distance or divergence penalty and does not rely on a parameterized policy.
% Instead of adding an explicit distance/divergence penalty or maintaining a separate parameterized policy, 
Instead, it starts from samples of the reference distribution and gradually nudges them toward higher value regions using a small, fixed number of guidance steps. The distribution after these steps serves as an implicit policy. The transport budget itself (i.e., how far and how often we move) acts as an implicit behavior regularization, limiting deviation from the reference distribution during training while preserving flexibility to enable adaptively scaling at inference. 
Notably, VGF supports different levels of regularization during training and inference. When the inference transport budget is zero, \methodname{} reduces to reject sampling methods, Conversely, using a larger transport budget at inference than during training brings test-time-scaling improvements.
% \methodname{} removes the need to balance the optimization conflict between reward maximization and derivation penalties, avoids brittle coefficient hyperparameter tuning, and better preserves multimodal structure inherited from the reference distribution.  
% \methodname{} also enables test-time scaling by simply increasing the number of guidance steps. 
In RLHF, \methodname{} yields inference-time control that steers a supervised finetuned policy using first-order gradient, sidestepping RL-style optimization and reducing both compute and engineering complexity. 
Across extensive experiments, \methodname{} consistently outperforms strong behavior-regularized baselines, attaining state-of-the-art results on standard offline RL suites (D4RL, OGBench) and delivering substantial gains on RLHF tasks.

% To summarize, the contributions of this paper are as follows:
% % 1. Totally new paradiagm for behavior regularized RL while offering several benefits.
% % 2. The first method that can be used for both offline RL and LLM RL finetuning, with breakthrough performance.
% \begin{itemize}[leftmargin=*]
% \item A new paradigm that casts behavior-regularized RL as bounded transport from a reference distribution to value-preferred regions, yielding an implicit policy
% % with multimodal expressivity and test-time scaling ability.
% with reduced conservatism.
% % \item A simple and stable training recipe that \textbf{decouples} critic learning from policy parameterization and supports \textbf{test-time scaling} without retraining.
% % \item A theoretical perspective showing that the transport budget induces \textbf{implicit regularization}, controlling deviation from trusted behaviors while avoiding over-conservatism.
% \item An unified and scalable framework for both offline RL and LLM RL, offering several distinctive benefits and achieving state-of-the-art empirical performance. 
% % on offline benchmarks and strong improvements in LLM alignment.
% \end{itemize}

\section{Preliminaries}
\noindent \textbf{MDP and value functions.} \
We consider the RL problem presented as a Markov Decision Process (MDP)~\citep{sutton1998introduction}, which is specified by a tuple $\mathcal{M}=\langle \mathcal{S}, \mathcal{A}, \mathcal{P}, d_{0}, r, \gamma \rangle$. Here $\mathcal{S}$ and $\mathcal{A}$ are state and action space, $\mathcal{P}(s'|s, a)$ and $d_{0}$ denote transition dynamics and initial state distribution, $r(s, a)$ and $\gamma$ represent reward function and discount factor, respectively. The goal of RL is to find a policy $\pi(a|s)$ which maximizes expected return $J(\pi) = \mathbb{E}_{\pi}[\sum_{t=0}^{\infty}\gamma^{t} \cdot r(s_t, a_t)]$. In the offline setting, interaction with the environment is prohibited and one needs to learn an optimal $\pi$ from a static replay buffer $\mathcal{D}=\{s_i, a_i, r_i, s^{\prime}_i\}_{i=1}^{N}$ collected from unknown policies. The dataset can be heterogeneous and suboptimal, we denote the 
% empirical visitation distribution of $\mathcal{D}$ as $d^\mathcal{D}$ and the 
empirical behavior policy of $\mathcal{D}$ as $\pi_{\mathcal{D}}$, which represents the conditional distribution $p(a|s)$ observed in the dataset.
% We also refer to $\mathcal{D}$ as the online replay buffer that is updated by filling in new transitions in the online or offline-to-online setting.
% and denote the empirical discounted state-action visitation distribution of $\pi^\mathcal{D}$ as $d^\mathcal{D}$.
% \noindent \textbf{Value functions and visitation distributions.} \quad

RL methods based on approximate dynamic programming typically maintain an action-value function ($Q$-function) and, optionally, a state-value function ($V$-function), referred to as $Q(s, a)$ and $V(s)$ respectively \citep{haarnoja2017reinforcement,nachum2017bridging,kumar2020conservative,kostrikov2021iql}.
Define $Q^\pi: \mathcal{S} \times \mathcal{A} \rightarrow  \mathbb{R}$, where $Q^\pi(s, a) = \mathbb{E}_{\pi}\left[ \sum_{t=0}^\infty \gamma^t r(s_t, a_t) | s_0 = s, a_0 = a \right]$.
The value function is learned by satisfying single-step Bellman consistencies.
% The visitation distribution $d^{\pi}$ is defined as $d^{\pi}(s, a) = (1-\gamma)\sum_{t=0}^{\infty}\gamma^{t}\operatorname{Pr}\left(s_t=s, a_t=a \mid s_0 \sim d_0, \forall t, a_t \sim \pi\left(s_t\right), s_{t+1} \sim \mathcal{P}\left(s_t, a_t\right)\right)$, which measures how likely $\pi$ is to encounter $(s,a)$ when interacting with the environment, averaging over time via $\gamma$-discounting.
% Let $V^*$, $Q^*$ and $d^*$ denote the value functions and visitation distribution corresponding to the regularized optimal policy $\pi^*$. 
Let $\mathcal{T}^{\pi}$ be the Bellman operator with policy $\pi$ such that $(\mathcal{T}^{\pi} Q)(s, a) := r(s, a) + \gamma \mathbb{E}_{s'|s,a}\mathbb{E}_{a'\sim \pi}\left[Q(s',a')\right]$.
Then $Q$ are learned by $\min_{Q} J(Q)=\frac{1}{2} \mathbb{E}_{(s, a) \sim \mathcal{D}}\left[(\mathcal{T}^{\pi} Q - Q)(s, a)^2 \right]$.

\noindent \textbf{Behavior-regularized RL.} \
% Connections and differences (prior, regularization, policy parameteration.) in Offline RL, RLHF.
In general, behavior-regularized RL considers the following constraint optimization problem with a reference distribution/policy $\mu$:
\begin{equation}
\label{eq_behavior_regularized_rl}
\pi^{\ast} = \arg\max\limits_{\substack{\pi}} \ \mathbb{E}_{s \sim \mathcal{D}, a \sim \pi(\cdot|s)} \big[R(s, a) \big] \ \ \text{s.t.} \ \ \mathbb{E}_{s \sim \mathcal{D}} \big[ M \left(\pi(\cdot|s), \mu(\cdot|s) \right) \big] \leq \epsilon,
\end{equation}
where $R(s,a)$ is a differentiable function and $M$ is some distance or divergence measure (e.g., KL-divergence, $L_2$ distance). 
$\mu$ could be the offline data distribution $\pi_{\mathcal{D}}$ in offline RL or the base model after pretraining in LLM RL. $R(s,a)$ could be either (multi-step) $Q$-function or (single-step) reward model in RL from human feedback (RLHF)~\citep{bradley1952rank}.
We now briefly discuss several different approaches to \Cref{eq_behavior_regularized_rl} and their limitations.
% In offline RL we choose $\mu$ as $\pi_{\mathcal{D}}$ and $R(s,a)$ to be the $Q$-function, whereas in RLHF $\mu$ is the LM after supervised finetuning and $R(s,a)$ is set to be the reward model trained on pairwise preference data~\citep{bradley1952rank}.
% Depending on $\pi$ and $M$, several different approaches can be used to extract the optimal policy based on \Cref{eq_behavior_regularized_rl}.

\textbf{(1) Policy gradient with $\pi$ reparameterized.} \
The most straightforward approach is to guide the policy to directly maximize function $R$ with reparameterized gradients. The constraint term will be added as a regularization term with a coefficient $\beta$ to balance these two gradients. 
\begin{equation}
\label{eq_pg}
\max\limits_{\substack{\pi}} \ \mathbb{E}_{s \sim \mathcal{D}} \Big[ \mathbb{E}_{ \color{mycolor} \boldsymbol{a \sim \pi} } \big[R(s, a) \big] - \beta \cdot M\big(\pi(\cdot|s), \mu(\cdot|s) \big) \Big].
\end{equation}
Reparameterized policy gradient is commonly used in offline RL with Gaussian policies~\citep{wu2019behavior,fujimoto2021minimalist,tarasov2024revisiting}.
However, extending this approach to large generative policies such as diffusion and flow matching models is challenging. These models generate actions through an iterative denoising process~\citep{lipman2022flow}. Reparameterized policy gradients require backpropagating through the sampling steps, which is often unstable and computationally expensive~\citep{wang2022diffusion}. An alternative is to use distillation to compress the multi step policy into a one step model~\citep{ding2023consistency,park2025flow}, but this will reduce expressivity.
% One issue of this type of method is that it uses first-order gradient information from $R(s,a)$ where actions are sampled from $\pi$, 
% this may bring over-estimation errors to both the policy and value learning learning without an appropriately selected penalty strength $\beta$.
% this brings over-estimation errors to the policy update without an appropriately selected $\beta$, and further affects value learning due to the \textit{deadly-triad} issue~\citep{van2018deep}.
% Another issue is that computing $\nabla_{\theta} \ \mathbb{E}_{a \sim \pi_\theta} \left[ R(s, a) \right]$ is unstable, requiring backpropagation through time when $\pi$ is a generative model, and cannot be applied to the LLM setting due to the discrete nature of language generation~\citep{rafailov2023direct}.

\textbf{(2) Reject Sampling with $M=\mathrm{KL}$.} \ 
Using KL-divergence gives \Cref{eq_pg} a closed-form solution which can be optimized by doing weighted behavior cloning (BC)~\citep{peng2019advantage,xu2023offline} where actions are sampled from the reference policy, as follows.
\begin{equation}
\label{eq_wbc}
\max_{\pi} \ \mathbb{E}_{s \sim \mathcal{D}, \color{mycolor} \boldsymbol{a \sim \mu} } \big[ \exp(R(s,a)/\beta) \cdot \log\pi(a|s)\big].
\end{equation}
Although simple and easy to implement, using weighted BC tends to be mode-covering, only amplifying weak signals from the reference distribution without extracting new skills or knowledge~\citep{wu2025invisible}.
In fact, a simple best-of-$N$ sampling policy~\citep{nakano2021webgpt}, where $N$ i.i.d. samples are drawn from the reference distribution and one with the highest $R(s,a)$ is returned, is theoretically near optimal for this KL-constrained RL problem~\citep{beirami2024theoretical,yang2024asymptotics}.
\begin{equation}
\label{eq_bon}
\pi^{*} = {\arg\max}_{\substack{ \color{mycolor} \boldsymbol{a_i \sim \mu, i \in [N] }}} \  R(s, a_i).
\end{equation}

\section{Value Gradient Flow}\label{sec:vgf}
VGF is designed to solve the above-mentioned challenges and provide a \textbf{simple and scalable} solution to \Cref{eq_behavior_regularized_rl}, and we give a detailed introduction in this section. 

\subsection{Behavior-Regularized RL as Optimal Transport}
We first consider a surrogate optimization objective that augments the value maximization objective in \Cref{eq_behavior_regularized_rl} with a policy entropy maximization term: $\mathbb{E}_{ a \sim \pi } \left[R(s, a) \right] + \alpha H(\pi(\cdot | s))$, where $H(\pi(\cdot | s)) \triangleq \mathbb{E}_{\pi}\left[ -\log \pi(a | s) \right]$ is the causal entropy of the policy $\pi$ at state $s$. This Maximum-Entropy (MaxEnt) formulation of RL is well-known to enhance the exploration and robustness of the policy~\citep{haarnoja2018soft,garg2021iq,eysenbach2021maximum}. However, our intuition here is that optimizing this MaxEnt objective turns the optimal policy distribution from greedy max to softmax over the whole action space, resulting a variational distribution
as the Boltzmann distribution over the value function $R(s,a)$~\citep{ziebart2010modeling,bloem2014infinite}:
\begin{align}
\label{eq_maxent}
\pi_{R}^*(a|s) = \frac{1}{Z_s}\exp{(R(s, a) / \alpha)},
\end{align}
where $Z_s$ is the normalization factor given as $\sum_{a^{\prime}} \exp \left(R\left(s, a^{\prime}\right) / \alpha \right)$.

\noindent \textbf{Particle-based gradient flow.} \
% We view behavior regularization as transporting probability mass from $\mu$ to $\pi^\star_Q$.
We reframe the value-maximization problem %%AZ.9.24: what is it? HX: revised
as an optimal transport problem that transports probability mass from  distribution $\mu$ to distribution $\pi^{*}_{R}$ defined in \Cref{eq_maxent}.
A natural way to formalize this transport is as a gradient flow of the functional $F(q)=\mathrm{KL}(q\,\|\,\pi^\star_R)$ on the space of probability measures endowed with the Wasserstein metric~\citep{jordan1998variational,ambrosio2008gradient,peyre2019computational}.
The resulting continuous-time evolution $q_t$ follows the continuity equation
$\partial_t q_t + \nabla\!\cdot(q_t v_t)=0$
with the steepest-descent velocity field
$v_t=\nabla \log \pi^\star_R-\nabla \log q_t$,
so that $F(q_t)$ decreases monotonically and the stationary distribution is $\pi^\star_R$.
% In our RL setting, the target score simplifies to $\nabla_a \log \pi^\star_Q(a\mid s)=(1/\alpha)\,\nabla_a Q(s,a)$, revealing a direct coupling between transport and value gradients.
% \paragraph{A discrete scheme and a particle approximation.}
However, directly solving $q_t$ is intractable, so we adopt the Jordan-Kinderlehrer-Otto (JKO) minimizing-movement scheme~\citep{jordan1998variational} to obtain a discrete gradient flow:
\begin{equation}
q_{k+1} = \arg\min_{q} \ \mathrm{KL}\!\left(q\,\|\,\pi^\star_R\right) + \frac{1}{2h}\,W_2^2\!\left(q,\,q_k\right),
\label{eq_jko}
\end{equation}
where $h>0$ is the step size and $W_2$ is the $2$-Wasserstein distance~\citep{peyre2019computational}.
In Euclidean space, \Cref{eq_jko} reduces to gradient descent on the function landscape. However, it is intractable as, in general, $q_k$ is infinite-dimensional.
% This motivates a kernelized, particle-interacting approximation that enforces both attraction toward high-probability regions of $\pi^\star_Q$ and repulsion to maintain diversity—precisely the mechanism provided by Stein variational methods.

% \paragraph{SVGD as a particle-based WGF solver.}
To obtain a practical solver, we approximate $q_k$ by an empirical measure over $N$ particles in action space (for a fixed state $s$), $q_k \approx \tfrac{1}{N}\sum_{i=1}^N \delta_{a_i^{(k)}}$, and seek an update rule for $\{a_i^{(k)}\}_{i=1}^N$ that decreases \Cref{eq_jko}.
By restricting the velocity field $v$ to the unit ball of a vector-valued reproducing kernel Hilbert space (RKHS), we get a solver that can be derived as the nonparametric functional gradient method that most rapidly decreases $\mathrm{KL}(q\,\|\,\pi^\star_R)$ within the RKHS~\citep{liu2016stein,liu2017stein}. This yields a particle-based gradient flow solver that approximates the discrete gradient flow as $a_i^{\,(l+1)} = a_i^{\,(l)} + \epsilon \cdot \phi(a_i^{\,(l)})$, where
\begin{equation}
% &a_i^{\,(l+1)} = a_i^{\,(l)} + \epsilon \cdot \phi \big(a_i^{\,(l)}\big), \\
\phi(x) = \frac{1}{N}\sum_{j=1}^N \Big[k(a_j,x)\, \underbrace{ \nabla_{a_j} \log \pi^\star_R(a_j | s)}_{=\nabla_{a_j} R(s,a_j) / \alpha}  +\nabla_{a_j} k(a_j,x) \Big] 
\xrightarrow{\scriptsize\text{w/o MaxEnt}} \frac{1}{N}\sum_{j=1}^N k(a_j,x) \nabla_{a_j} R(s,a_j).
\label{eq_svgd}
\end{equation}
Here, $\epsilon$ is the step size and $k(\cdot,\cdot)$ is the kernel function. 
% (e.g., RBF kernel with $k(a_i,a_j) = \exp(||a_i -a_j||^2/2 \sigma^2)$. 
The first term in $\phi(a_i)$ drives the particles toward the high probability regions of $\pi^{*}_R$ (i.e., with high $R(s,a)$), while the second term serves as a repulsive force to encourage dispersion and preserve multi-modality of the particles. The second term and $\alpha$ vanish when we transform the surrogate MaxEnt objective back to the original objective. This is equivalent to taking the limit $\alpha \to 0$ and absorbing $\alpha$ into the step size $\epsilon$.

% Convergence of this particle-based gradient flow to the target distribution can be easily proved as $\mathbb{E}_{a^L \sim \pi^*_R(\cdot|s)}[\phi(a^{L})]=0$.
% We would like to show the advantage of optimizing the Wasserstein distance via SVGD over reducing KL divergence. An simple observation is that if the supports of the two distributions are different, the KL divergence tends to be infinite, especially when there exists an action $a \in S_\pi, a \not\in S_\mu$. However, the Wasserstein distance is always a finite value, regardless of the difference of supports. We can establish a theorem to quantify the Wasserstein distance.

Note that an \textbf{implicit behavior regularization} is imposed via controlling the transport budget ($L$, $\alpha$ and $\epsilon$). Intuitively, \Cref{eq_svgd} performs a kernel-smoothed transport to each particle in the action space, resulting in a controlled derivation from the reference distribution. Theoretically, we show in the following that the Maximum Mean Discrepancy (MMD) distance between the initial particles sampled from the reference policy and the particles generated by \methodname{} is bounded.
% \haoran{can we prove that the distance between $a^{L}$ an $a^{0}$ is bounded?}
% \haoran{could diffusion models do similar things}?
% \amy{If you are controlling the distance in the action space it requires a Lipschitz assumption on the dynamics (which does not always hold true in the real world, but I think is an ok assumption to make}
% \begin{lemma}
%     If the value function $Q(s,a)$ is $c$-Lipschitz with regard to the action $a$, then every dimension of the update function $h$ (i.e., $\|h\|_\infty$) is upper-bounded by $\frac{c}{\alpha}+\frac{1}{\sigma\sqrt{e}}$.
% \end{lemma}
\begin{theorem}
    Assume the value function $R(s,a)$ is $c$-Lipschitz w.r.t the input action $a$. Define the implicit policy that performs \Cref{eq_svgd} for $L$ steps with $N$ particles as $\pi_{N}^{L}$. We have
    \begin{equation*}
        \mathrm{MMD}^2(\mu, \pi_{N}^{L}) = \mathrm{MMD}^2(\pi_{N}^{0}, \pi_{N}^{L}) \leq \frac{2\epsilon L}{\sigma\sqrt{e}}\left(\frac{c}{\alpha} + \frac{1}{\sigma\sqrt{e}}\right).
    \end{equation*}
    % With $L$ iterations of SVGD updates, the target distribution $\mu^L$ has a bounded distance from the behavioral distribution $\mu^0$ of $2\epsilon KHL$, where $\epsilon$ is the update step size, $K:=\frac{1}{\sigma\sqrt{e}}$, and $H:=\frac{c}{\alpha} + \frac{1}{\sigma\sqrt{e}}$.
    \label{thm:mmd_bound}
\end{theorem}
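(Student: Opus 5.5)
We take the kernel to be the Gaussian RBF kernel $k(x,y)=\exp(-\|x-y\|^2/(2\sigma^2))$, for which $0<k\le 1$. We read $\mu$ as the empirical measure $\pi_N^0=\frac1N\sum_i\delta_{a_i^{(0)}}$ of the initial particles; this is the first equality in the statement. The plan has three parts: a uniform bound on the per-step velocity $\phi$, a bound on the total displacement of each particle over $L$ steps, and a conversion of that displacement into an MMD bound through the Lipschitz constant of the kernel.

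\textbf{Step 1 (kernel facts).} We use $\nabla_x k(x,y)=-\frac{x-y}{\sigma^2}k(x,y)$. Since $\sup_{t\ge0}\frac{t}{\sigma^2}e^{-t^2/(2\sigma^2)}$ is attained at $t=\sigma$, this gives $\|\nabla_x k(x,y)\|\le \frac{1}{\sigma\sqrt e}=:K$. So $k$ is $K$-Lipschitz in each argument, and moreover $|k(x,y)-k(x',y')|\le K(\|x-x'\|+\|y-y'\|)$.

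\textbf{Step 2 (bounded velocity).} Since $R$ is $c$-Lipschitz in $a$, we have $\|\nabla_a R\|\le c$. Together with $k\le1$ and Step 1, the triangle inequality applied to \Cref{eq_svgd} gives
\[
\|\phi(x)\|\le \frac1N\sum_j\Big(\frac{c}{\alpha}+K\Big)=\frac{c}{\alpha}+\frac{1}{\sigma\sqrt e}=:H .
\]
Each update therefore moves a particle by at most $\epsilon H$, and after $L$ steps $\|a_i^{(L)}-a_i^{(0)}\|\le \epsilon L H$ for every $i$.

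\textbf{Step 3 (MMD).} Pair each particle $a_i^{(0)}$ with its image $a_i^{(L)}$ and write
\[
\mathrm{MMD}^2(\pi_N^0,\pi_N^L)=\frac1{N^2}\sum_{i,j}\Big[k(a_i^0,a_j^0)+k(a_i^L,a_j^L)-k(a_i^0,a_j^L)-k(a_i^L,a_j^0)\Big].
\]
Group the bracket as $[k(a_i^0,a_j^0)-k(a_i^0,a_j^L)]+[k(a_i^L,a_j^L)-k(a_i^L,a_j^0)]$. Each difference changes only one argument, so by Step 1 each is at most $K\|a_j^L-a_j^0\|\le K\epsilon LH$. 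Hence every bracket is at most $2K\epsilon LH$, and averaging over $i,j$ gives
\[
\mathrm{MMD}^2\le \frac{2\epsilon L}{\sigma\sqrt e}\Big(\frac{c}{\alpha}+\frac1{\sigma\sqrt e}\Big),
\]
which is exactly the claimed bound.

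The main obstacle is Step 3: one must choose the pairing and grouping so that only one argument changes per term. This is what produces the factor $2K$ rather than something quadratic in the displacement. The other subtle point is the interpretation of $\mu$ as $\pi_N^0$, since the identity $\mathrm{MMD}(\mu,\pi_N^L)=\mathrm{MMD}(\pi_N^0,\pi_N^L)$ holds only when $\mu$ is identified with the empirical measure of the initial particles.
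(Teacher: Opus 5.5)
Your proof is correct and follows the same route as the paper. Both arguments bound the velocity by $H = c/\alpha + 1/(\sigma\sqrt{e})$ using the $1/(\sigma\sqrt{e})$-Lipschitz RBF kernel, pair each initial particle with its image, and split each MMD summand into two differences where only one kernel argument moves, giving $2\epsilon KHL$. Your version is actually more careful than the paper's in two ways: you use the Euclidean norm throughout, and you bound the total displacement $\|a_i^{(L)}-a_i^{(0)}\|\le \epsilon LH$ directly. The paper instead bounds a single iteration in a loosely applied $\|\cdot\|_\infty$ and then multiplies by $L$, a step that does not telescope on its own.
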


% Value function is updated via the conventional TD learning with the target value averaged over all particles from the implicit policy.
% \begin{equation}
% Q(s_t, a_t) \leftarrow r(s_t, a_t) + \gamma \mathbb{E}_{s_{t+1}, a^{1}_{t+1} \sim \mu_1} \left[ Q(s_{t+1}, a^{1}_{t+1}) \right],
% \end{equation}
% Why don't use maximum entropy RL formulation.

% \subsection{VGF in the RLHF setting}
\noindent \textbf{VGF in the LLM setting.} \
In the LLM RL setting, at time step $t$, the action $a_t$ is a discrete token and the state $s_t$ is the token sequence
$s_t = (x_0,\ldots,x_L, a_0,\ldots,a_{t-1})$, where $x=(x_0,\ldots,x_L)$ is the input prompt and
$y=(a_0,\ldots,a_{t-1})$ are the generated tokens up to step $t-1$.
The transition function $P$ updates the state deterministically via concatenation:
$s_{t+1} = P(s_t,a_t) = s_t \,\Vert\, a_t$.

A direct application of Eq.~(\ref{eq_svgd}) to tokens is infeasible because tokens are discrete. We therefore perform \methodname{} in a \textbf{continuous surrogate space} and decode back to the discrete token space only at the end of the gradient flow. Let $u$ be a differentiable representation of a full response $y$. The representation could either be the token-embedding matrix $u\in\mathbb{R}^{T\times d}$ or a latent vector $u=z\in\mathbb{R}^{m}$ of a flow or diffusion language model with $y=\mathrm{Dec}(z)$. Denote $y_i^{(l)}=\mathrm{Dec}(u_i^{(l)})$. 
% Although we cannot directly optimize parameter-level gradients $\nabla_\theta \mathbb{E}_{y\sim\pi_\theta(\cdot\mid x)}[R(x,y)]$ due to discrete autoregressive sampling, the reward model yields response-level gradients $\nabla_y R(x,y)$ that are back-propagated to the embedding surrogate via chain rule:
Because the reward model is differentiable with respect to its input embeddings, response-level gradient $\nabla_y R(x,y)$ can be back-propagated to the surrogate via the chain rule as follows.
\begin{equation}
\label{eq_svgd_llm}
\nabla_{u_i}\log \pi_R^*\big(y_i^{(l)} | x \big)
= \frac{1}{\alpha}\, J_i^{\top}\,\nabla_{y} R\big(x, y_i^{(l)}\big),
\quad
J_i := \frac{\partial \mathrm{Dec}\big(u_i^{(l)}\big)}{\partial u_i^{(l)}}.
\end{equation}
% After $L$ steps we decode $y_i^{(L)}=\mathrm{Dec}(u_i^{(L)})$ and return the set $\{y_i^{(L)}\}_{i=1}^{N}$.
One motivation to use \methodname{} is that the SFT policy is far from random, with most probability mass concentrating on a small subset of tokens and modes. 
% VGF exploits this by starting from plausible responses sampled from $\mu$ and performing \emph{local}, first-order transport guided by $\nabla_y R(x,y)$. 
Note that VGF utilizes first-order gradient guidance from $R$, this avoids high-variance PPO-style optimization~\citep{ouyang2022training} and enables inference-only control similar to best-of-$N$ sampling. 
However, the difference is that \methodname{} steers particles toward high-reward modes, the resulting implicit policy need not remain within the support of the reference distribution, as shown in the following theorem.
\begin{theorem}
\label{thm:support_shift}
Define the $\epsilon$-support of a distribution $P$ as $\mathrm{supp}_\epsilon(P) := \{x:p(x) \geq \epsilon\}$. We have
\begin{equation*}
    \mathrm{supp}_\epsilon(\pi_{N}^{L}(\cdot|s)) \not\subseteq \mathrm{supp}_\epsilon(\mu(\cdot|s)).
\end{equation*}
\end{theorem}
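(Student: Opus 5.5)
The theorem asserts a non-inclusion, and this cannot hold for every $\mu$ and $R$. For example, if $R$ is constant in $a$, then without the MaxEnt term the update $\phi \equiv 0$, so $\pi_N^L=\pi_N^0$ and nothing moves. I will therefore read the statement existentially: there exist a reference $\mu(\cdot|s)$, a $c$-Lipschitz value function $R$, and choices of $L,\epsilon,N,\sigma$ for which the $\epsilon$-support of the implicit policy escapes that of $\mu$. The plan is to build such an instance explicitly. It is convenient to write $\tau$ for the support threshold, to keep it separate from the SVGD step size $\epsilon$.

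\textbf{Construction.} Take a one-dimensional action space.
\begin{itemize}
\item Let $\mu(\cdot|s)$ be a smooth, compactly supported bump, for instance uniform on $[-1,1]$, mollified if needed. Then $\mathrm{supp}_\tau(\mu)\subseteq[-1,1]$ for every $\tau>0$.
\item Let $R(s,a)=c\,a$, which is $c$-Lipschitz with constant gradient $\nabla_a R=c$.
\item Use the update in its ``w/o MaxEnt'' form from \Cref{eq_svgd} with an RBF kernel, so that $\phi(x)=c\cdot\frac{1}{N}\sum_j k(a_j,x)$.
\end{itemize}

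\textbf{Key step 1: every particle moves right.} All particles start in $[-1,1]$ and the kernel is positive, so the initial pairwise distances are at most $2$. Hence each kernel value is at least $e^{-4/(2\sigma^2)}$, and $\phi(a_i)\ge c\,e^{-2/\sigma^2}>0$. I will show by induction on the step index that the particles translate rigidly. Because the drift term $\frac1N\sum_j k(a_j,a_i)$ is identical for all particles whenever their configuration is a rigid translate of the initial one, that configuration is preserved. More simply, one can take $N=1$ or $\sigma\to\infty$: the kernel is then $\approx 1$ and $\phi\equiv c$ exactly, so every particle shifts by exactly $\epsilon c$ per step.

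\textbf{Key step 2: the support leaves $[-1,1]$.} After $L$ steps every particle has shifted by $\Delta=L\epsilon c$ (or at least $L\epsilon c\,e^{-2/\sigma^2}$ in the general-$\sigma$ case). Interpreting $\pi_N^L$ as the pushforward of $\mu$ under this map, in the $N\to\infty$ / mean-field sense, gives $\pi_N^L(a|s)=\mu(a-\Delta|s)$. Choose $\Delta>2$. The density of $\pi_N^L$ at $\Delta$ then equals $\mu(0)=\tfrac12$, which is at least $\tau$ for any $\tau\le\tfrac12$, while $\mu(\Delta)=0<\tau$. So $\Delta\in\mathrm{supp}_\tau(\pi_N^L)\setminus\mathrm{supp}_\tau(\mu)$, and the non-inclusion holds.

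\textbf{Main obstacle.} The delicate step is giving $\pi_N^L$ a density when $N$ is finite, since an empirical measure has no density. I would handle this by defining $\pi_N^L$ as the pushforward $T_\#\mu$ of the SVGD map $T$ applied to fresh samples from $\mu$, which is how the implicit policy is actually used. I would then check that $T$ is a diffeomorphism for small $\epsilon$, so that the change-of-variables density is well defined. With $\phi$ constant, as in the $N=1$ or large-$\sigma$ case, $T$ is a pure translation and this is immediate.

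Finally, I would remark why this contrasts with best-of-$N$ and weighted BC. Those methods only reweight samples drawn from $\mu$, so their outputs always satisfy $\mathrm{supp}(\pi)\subseteq\mathrm{supp}(\mu)$. The gradient transport, by contrast, moves probability mass out of that support.
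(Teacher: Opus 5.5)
Your argument is correct under the existential reading. Your observation that the statement cannot hold universally is also sound: with $R$ constant in $a$ and $N=1$, nothing moves, even in the MaxEnt form. Your route differs from the paper's. The paper argues genericity rather than existence:
\begin{itemize}
\item In the discrete (empirical) case, it notes that the relation $a_i = a_1 + \epsilon h(a_1)$ with $a_i$ an original particle is destroyed by an arbitrarily small perturbation of $\nabla_a R$. So updated particles almost surely miss the original point set.
\item In the continuous case, it assumes $\pi_N^0$ and $\pi_N^1$ are both Gaussian, computes $h$, and argues that an extremal point of the $\epsilon$-support is pushed outside it.
\end{itemize}
That approach aims at a statement about typical value functions and arbitrary $N$ with real kernel interaction. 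However, it rests on an unverified Gaussian ansatz, and on a symmetry step $\mathbb{E}[k(a_j,a)(\mu_1-a_j)]=0$ that only holds at $a=\mu_1$. Also, in the discrete case it only shows that particles avoid landing on one another, not that they exit $\mathrm{supp}_\epsilon(\mu)$ for a continuous $\mu$. Your explicit instance ($R=ca$, uniform $\mu$, $N=1$, $\Delta=L\epsilon c>2$) gives full rigor, including a well-defined density for $\pi_N^L$ via the translation pushforward. The cost is that it says nothing about generic instances.

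There are two slips to clean up; your final argument needs neither.
\begin{itemize}
\item The ``rigid translation'' induction for general $\sigma$ is false. The sum $\frac1N\sum_j k(a_j,a_i)$ depends on $i$: interior particles see larger kernel sums than edge particles, so the configuration deforms. Also, for finite $\sigma$ the kernel is never identically $1$, so $\phi\equiv c$ holds only in the limit $\sigma\to\infty$.
\item If the threshold $\tau$ is given rather than chosen, you need $\sup\mu\ge\tau$. Otherwise both $\tau$-supports are empty and the inclusion holds trivially. Take $\mu$ uniform on $[-\delta,\delta]$ with $1/(2\delta)\ge\tau$.
\end{itemize}
For the first slip, simply commit to $N=1$, where $\phi(a_1)=c\,k(a_1,a_1)=c$ exactly. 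This choice also covers the MaxEnt form: $\nabla_{a_j}k(a_j,x)$ vanishes at $a_j=x$, so each step is a translation by $\epsilon c/\alpha$. For general $N$, the self-term bound $\phi(a_i)\ge c/N$ would at least guarantee that every particle leaves $[-1,1]$ once $L\epsilon c/N>2$. You would then still need a density lower bound for the resulting law, so $N=1$ remains the clean choice.
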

This indicates that, unlike the methods discussed in Section~2 which result in $\mathrm{supp}_\epsilon(\pi(\cdot|s)) \subseteq \mathrm{supp}_\epsilon(\mu(\cdot|s))$~\citep{wu2025invisible}, \methodname{} breaks the over-conservative behavior constraint, enabling the discovery and exploitation of novel behaviors beyond the reference distribution.

\begin{figure}[t]
\centering
\vspace{-0.5cm}
\includegraphics[width=0.9\textwidth]{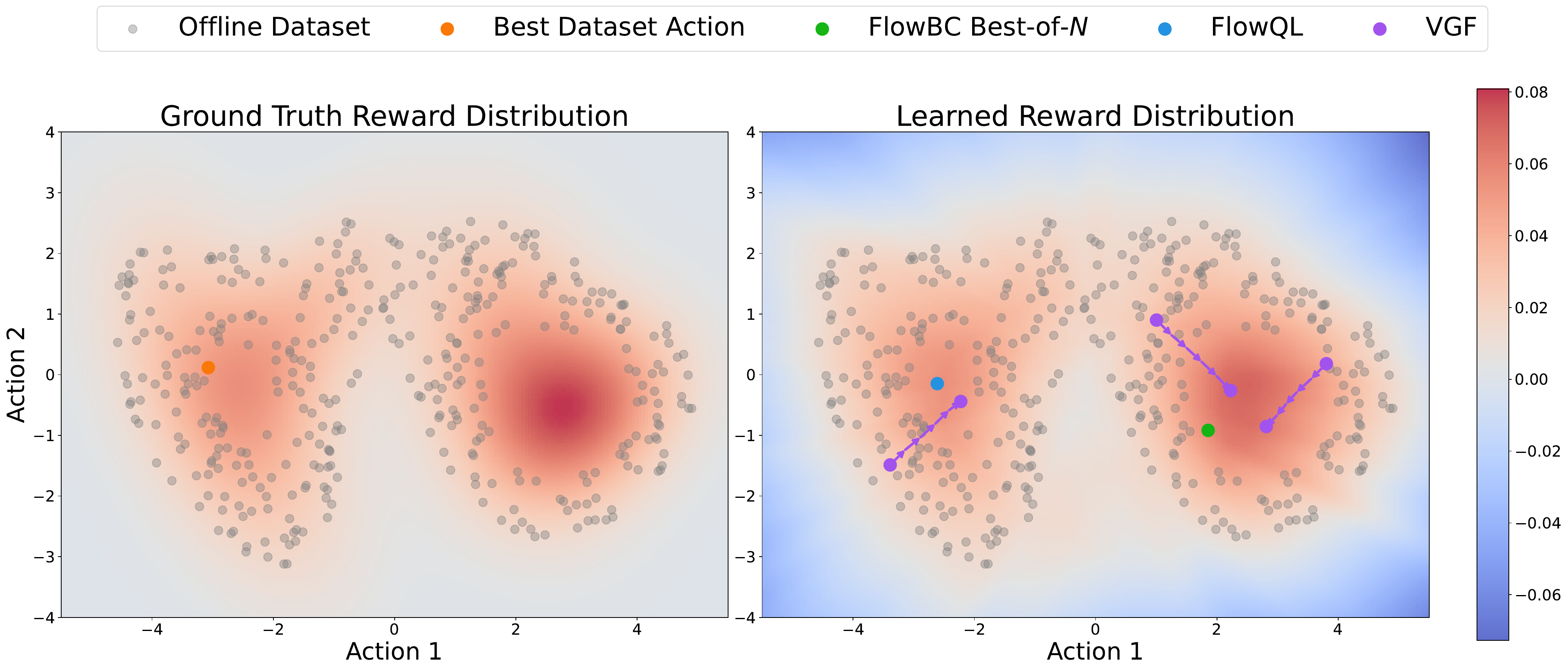}
\vspace{-0.1cm}
\caption{\footnotesize 
\textbf{Toycase results}. \methodname{} generates actions with higher ground-truth reward than other methods. 
}
\label{fig_toycase}
\vspace{-0.2cm}
\end{figure}

% \noindent \textbf{What benefits does VGF bring?} \
\subsection{Discussion} %%AZ.9.24: more descriptive section title?
% The connection to previous work is like BCQ, FQL, Diffusion-DICE and CFGRL (difference with Diffusion-DICE and CFGRL: VGF is doing TD learning with better value functions). 
Below, we outline several distinctive advantages of \methodname{} and discuss the connection with prior work.

\textbf{(1) Stable optimization with reduced conservatism.} \
Compared with methods based on reparameterized policy gradient~\citep{wang2023offline},
\methodname{} avoids optimization instability caused by backpropagating through time. Futhermore, \methodname{} is optimized to find the best reward-maximization policy within a fixed behavior constraint, which is more aligned with \Cref{eq_behavior_regularized_rl}.

\textbf{(2) Implicit policy with multimodal expressivity.} \
While bypassing explicit policy parameterization, the implicit policy in \methodname{} is still expressive enough to capture a multimodal distribution. Owing to the usage of gradient flow, \methodname{} naturally preserves and sharpens multiple high-value modes from the reference distribution instead of collapsing to a single one.
This is different from BCQ~\citep{fujimoto2019off} where a Gaussian residual policy with limited expressivity is learned on top of the reference policy. 
Note that in the offline RL setting, \methodname{} remains versatile to the usage of different advanced generative models, e.g., Diffusion models~\citep{song2021maximum} or Flow models~\citep{lipman2022flow}, to generate samples from the reference distribution given only an offline dataset.
% we have the following matching loss $L_{\text{Flow}}(\theta)$~\citep{lipman2022flow}.
% \begin{equation*}
% \mathbb{E}_{\substack{s \sim \mathcal{D}, a=x^1 \sim \mu, \\ x^0 \sim \mathcal{N}(0, I), \\ t \sim \mathrm{Unif}([0, 1])}} \left[\|v_\theta(t, s, x^t) - (x^1 - x^0)\|^2\right],
% \end{equation*}
% where $v_\theta(t, s, x)$ is a state-and-time dependent vector field with parameter $\theta$.
% For the use of diffusion models, we have the following matching loss $L_{\text{Diffusion}}(\theta)$ based on \citet{ho2020denoising}.
% \begin{equation}
% \label{eq_diffusion_loss}
% \mathbb{E}_{s \sim \mathcal{D}, a \sim \mu, \epsilon \sim \mathcal{N}(0, I), t \sim \mathrm{Unif}([1, T])}  \left[\| \epsilon-\epsilon_{\theta}\left(\sqrt{\bar{\alpha}_{t}} a+\sqrt{1-\bar{\alpha}_{t}} \epsilon, s, t\right)\|^2\right].
% \end{equation}

\textbf{(3) Adaptive scaling during test-time.} \
One intriguing property of \methodname{} is that it enables adaptive test-time scaling via varying the transport budget \textbf{without} any retraining.
For example, when the value function $R$ can generalize well, the performance will scale with the number of test-time flow steps, which could be different from the number of train-time flow steps.  
However, when the value function has large extrapolation errors, by setting the test-time flow steps to 0, \methodname{} reduces to Best-of-$N$ sampling methods~\citep{chen2022offline,hansen2023idql}. One difference in this case is that \methodname{} learns the value function by TD learning (since the train-time flow step is not 0) instead of in-sample learning~\citep{kostrikov2021iql,xu2023offline}. We find in practice that TD learning enables better stitching and generalization.
This difference makes \methodname{} \textbf{fundamentally different} from Diffusion-based methods~\citep{mao2024diffusion,frans2025diffusion} that can also do adaptive generation via adjusting the guidance weight but rely on in-sample value learning.

% \noindent \textbf{Connections to prior work} \quad
\begin{wrapfigure}{R}{0.45\textwidth}
\vspace{-0.85cm}
\begin{minipage}{0.45\textwidth}
\begin{algorithm}[H]
\small
\caption{Value Gradient Flow}
\label{alg_vgf}
\begin{algorithmic}[1]
    \BeginBox[fill=mycolor!10]
    \Function{\textsc{\textcolor{mycolor}{VGF}}}{$s, \hat{\mu}, R, L_{\mathrm{test}}$}
        \State Get $a_{N}^{0} \sim \hat{\mu}(\cdot | s)$
        \For{$l = 0, 1, \dots, L_{\mathrm{test}}-1$}
            \State Get $a_{N}^{l+1}$ using $R$ and $a_{N}^{l}$ by Eq.~(\ref{eq_svgd})
        \EndFor
        \State \Return $a_{N}^{L_{\mathrm{test}}}$
    \EndFunction
    \EndBox

    \Require $\mathcal{D}$, $L_{\mathrm{train}}$, $L_{\mathrm{test}}$, $\epsilon$
    \LComment{\color{mycolor} Value Training (offline RL)}
    \For{$t=1, 2, \cdots, M$}
        \State Sample transitions $(s, a, r, s') \sim \mathcal{D}$
        \State Train behavior cloning policy $\hat{\mu}$
        \State Get $a_{N}^{L_{\mathrm{train}}}
            = \text{\textsc{\textcolor{mycolor}{VGF}}}(s',\hat{\mu},Q,L_{\mathrm{train}})$
        \State Train $Q$ using $a_{N}^{L_{\mathrm{train}}}$ by TD learning
    \EndFor

    \LComment{\color{mycolor} Evaluation (offline RL and RLHF)}
    \State Get initial state $s$, set $d$ as False
    \While{not $d$}
        \State Get $a_{N}^{L_{\mathrm{test}}}
            = \text{\textsc{\textcolor{mycolor}{VGF}}}(s,\hat{\mu},R,L_{\mathrm{test}})$
        \State Get best-of-$N$ $a^*$ from $a_{N}^{L_{\mathrm{test}}}$ by Eq.~(\ref{eq_bon})
        \State Roll out $a^*$ and get $(s', r, d)$
        \State Set $s \gets s'$
    \EndWhile
\end{algorithmic}
\end{algorithm}
\end{minipage}
\vspace{-0.4cm}
\end{wrapfigure}

\noindent \textbf{Practical consideration.} \
We summarize the pseudo-code in \Cref{alg_vgf}.
To reduce sampling cost, we use a small number of VGF particles ($N=5$) across all experiments. At test time, since we need to choose one particle to do evaluation, we use best-of-$N$ sampling from all VGF particles based on the value/reward function. 
In offline RL, given the offline dataset, we train a behavior cloning policy to generate samples from $\mu$.
The $Q$-function is trained using TD-learning, and we average over all particles when computing the target $Q$-values. We use the \textbf{w/o MaxEnt} objective in \Cref{eq_svgd}. To accelerate training and inference, we additionally train a network $f(s,a)$ to capture the gradient of $Q$-function via $\min_f \mathbb{E}_{(s,a) \sim \mathcal{D}}[(f(s,a) - \nabla_a Q(s,a))^2]$.

\noindent \textbf{A toy example.} \
We use a toy example to illustrate the mechanism of \methodname{}.
% that \methodname{} tends to be a better method than FQL and traditional KL regularization in terms of exploring an out-of-support area with high rewards.
We construct a 2-D continuous control bandit task with a bimodal ground-truth reward distribution, where the offline dataset is generated from sampling from sub-optimal reward regions, as demonstrated in Figure~\ref{fig_toycase}. 
We are interested in studying the behavior of the following three different behavior-regularized RL methods.
Note that all three methods fit a learned reward model using $L_2$ loss and a BC flow model using flow-matching loss.
\textbf{FlowQL}~\citep{park2025flow}: This method represents the first group of methods in section 2. FlowQL additionally trains a one-step flow model as the policy, which is used during evaluation. We carefully tune the coefficient $\beta$ to ensure the best performance.
\textbf{FlowBC Best-of-$N$}: This method represents the second group of methods in section 2. In this case, we sample $N=20$ actions from the BC flow model and do best-of-$N$ sampling using the learned reward function.
\textbf{\methodname{}}: For this task, we set particle number $N=3$, $L_{\mathrm{test}}=5$ and $\alpha=0.1$. 
% \haoran{this baseline has a closed form solution which amounts to rejected sampling using r(s,a)}
% \item Baseline 2: Use DQL \haoran{the second baseline is max r(s,a) + bc(pi, mu) where bc is not KL and pi is either diffusion models or flow models, use FQL is enough to showcase, no need for DQL}

We plot the action generated by FlowQL and FlowBC Best-of-$N$, along with the flow trajectories of particles generated by VGF, in Figure~\ref{fig_toycase}. As shown, even if the learned reward model gets some error, the implicit policy in \methodname{} demonstrates successful and effective exploration of the area with high ground-truth reward.  
% Although not all the sampled actions eventually reach the higher-reward area, a certain proportion of the actions do if their initial positions are close to it. 
% This indicates that \methodname{} enables the sampled actions to break the restriction of the dataset support and achieve outstanding performance. 
By contrast, FlowQL is shown to be misled by the error of the learned reward model, generating actions with suboptimal values. 
% We conjecture that this is because the actor loss of FlowQL consists of both one-step distillation loss and reward-maximization loss, and jointly optimizing these losses leads to erroneous generalization when both the behavior is suboptimal and the learned reward function is inaccurate. 
Although best-of-$N$ sampling from the FlowBC model could improve the best dataset action, it still falls within the support of the suboptimal behavior distribution due to over-conservatism, which aligns well with the theoretical analysis.

\section{Related Work}

\noindent \textbf{Offline RL.} \
To address distributional shift, most model-free offline RL methods augment off-policy learning with behavior regularization. 
This can appear explicitly as divergence penalties that constrain the learned policy toward the dataset distribution \citep{wu2019behavior,kumar2019stabilizing,fujimoto2021minimalist} or implicitly via weighted behavior cloning and advantage-weighted updates \citep{wang2020critic,nair2020accelerating}. 
% A complementary line modifies value learning to penalize OOD actions and promote pessimism \citep{kumar2020conservative,kostrikov2021offline}, with related approaches leveraging uncertainty estimates or learned distance functions to discourage extrapolation \citep{an2021uncertainty,li2022data}. 
% While effective at reducing overestimation, many of these methods rely on actor–critic formulations with unimodal Gaussian policies; several studies have noted their difficulty in representing multi-modal action distributions and the resulting conservatism within the data support \citep{wang2022diffusion,hansen2023idql,chen2022offline}. 
To improve policy expressiveness, recent work adopts expressive generative models as policies, including Decision Transformer \citep{chen2021decision}, diffusion-based policies \citep{wang2022diffusion,chen2022offline,hansen2023idql}, flow matching policies \citep{park2025flow} and consistency-model policies \citep{ding2023consistency}. 
% Others perform trajectory-level planning with generative models and execute the action for the current state from synthesized high-return trajectories \citep{janner2022planning,ajay2022conditional}. Despite these advances, relatively little work examines how errors in expressive behavior models can be exploited during generation or how to regularize improvement without resorting to explicit distance penalties.
% \citep{liu2023budgeting} This paper considers Budgeting Counterfactual actions as regularization, while we start from the optimal transport level. 
Compared with these methods, VGF bypasses explicit policy parameterization, improving training stability while retaining strong expressiveness.
There are several recent works that use the $Q$ action gradient to guide the policy learning \citep{yang2023policy,psenka2023learning,mark2024policy,xu2025uni,li2026q}. However, their motivations and derivations are fundamentally different from VGF. Among them, the most closely related methods are PA-RL \citep{mark2024policy} and QAM \citep{li2026q}. Compared to PA-RL, VGF avoids explicitly parameterizing a policy, which can otherwise limit flexibility for adaptive test-time scaling. In contrast to QAM, which optimizes a KL-regularized policy that still remains within the support of the behavior distribution, VGF is encouraged to move beyond the behavior distribution due to its implicit regularization. 
% This improves offline training and promotes more effective online exploration.

\noindent \textbf{Reinforcement Learning from Human Feedback.} \
% In RLHF, policy models can exploit imperfections in reward models, a phenomenon often termed reward over-optimization \citep{gao2023scaling}, also known as reward hacking or reward gaming \citep{amodei2016concrete,skalse2022,pang2023}. 
% Many studies analyze this effect in synthetic setups using strong “gold” models for labeling and evaluation \citep{gao2023scaling,moskovitz2023constraining,coste2023ensemble}. A central mitigation strategy adds a KL penalty to the reward or training objective, encouraging the policy to remain close to a supervised-finetuned reference model \citep{kullback1951,stiennon2020learning,ouyang2022training,bai2022}. 
% Other approaches use ensembles or early-stopping-like constraints to limit over-optimization while controlling KL \citep{coste2023ensemble,moskovitz2023constraining}. 
% However, such explicit penalties introduce a reward–KL trade-off that is sensitive to coefficient tuning and can be overly conservative within the in-distribution region of the reward model. 
In RLHF, policy models can exploit imperfections in learned reward models, a phenomenon often termed reward over-optimization \citep{gao2023scaling} and also discussed as reward hacking or reward gaming \citep{amodei2016concrete,skalse2022,pang2023}. Many studies analyze this effect in synthetic setups that substitute expensive human evaluation with strong "gold" models for labeling and assessment \citep{gao2023scaling,moskovitz2023constraining,coste2023ensemble}. A prevailing mitigation strategy augments the reward or training objective with a KL penalty to a supervised-finetuned reference model \citep{kullback1951,stiennon2020learning,ouyang2022training,bai2022}. Other approaches employ ensembles or early-stopping-style constraints to curb over-optimization while controlling KL \citep{coste2023ensemble,moskovitz2023constraining}. However, explicit penalties inevitably introduce a reward-KL trade-off that is sensitive to coefficient tuning and can be overly conservative towards the reference support.

\noindent \textbf{Optimal transport in RL.} \
Optimal transport (OT) provides a geometry over distributions that has proved useful in multiple RL settings. In distributional RL, Wasserstein metrics underpin return-distribution learning via quantile-regression objectives, improving stability and control~\citep{dabney2018distributional}. OT has also been used to align occupancy distributions for imitation and offline learning. For example, Sinkhorn-based matching or primal Wasserstein formulations that shape rewards and enable cross-domain alignment~\citep{dadashi2020primal}. Beyond matching, an OT viewpoint motivates transporting probability mass toward value-preferred regions, inspiring flow/particle-style policy updates and robust formulations that explicitly constrain distributional shift. Our work follows this trajectory but emphasizes transport from the reference distribution to the optimal policy distribution using value gradients. 
% This yields an implicit policy and turns the transport budget itself into behavior regularization without tightly coupled policy-optimization loops~\citep{peyre2019computational}. 
Other work like PPL~\citep{asadulaev2024rethinking} considers transport between states and partial action distributions, whereas \methodname{} operates in the action space.
% Perhaps the most similar paper to us is xxx, which considers policy optimization as wasserstein gradient flow. However, the particle-based solution is derived to be different where VGF uses insight from behavior-regularized RL, and they study the online RL setting while our method focuses on the behavior-regularized RL setting.

% D4RL
\begin{table}[t]
\centering
% \small
% \begin{threeparttable}
\caption{
\footnotesize 
\textbf{D4RL offline RL results}. 
Scores are averaged over the final 10 evaluations across 5 seeds with standard deviation reported, we highlight the best score in integer-level. \methodname{} demonstrates superior performance on most tasks, especially those challenging ones.
% \haoran{change idql to sfbc}
% UNIVR achieves the highest scores in most of tasks.
}
% Our method outperforms prior methods on the challenging Ant Maze tasks, which require dynamic programming, and is competitive with the best prior methods on the locomotion tasks.  %
\label{tab_d4rl_result}
% \scriptsize
\resizebox{1.0\linewidth}{!}{
% \begin{tabular}{l||ccccccccc|cc}
% \toprule
% Dataset                      & BC    & 10$\%$BC & BCQ   & DT    & One-step & TD3+BC & CQL   & IQL                  & IVR  & \textcolor{mycolor}{\textbf{\makecell{UNIVR \\ (Gaussian)}}} & \textcolor{mycolor}{\textbf{\makecell{UNIVR \\ (Diffusion)}}}  \\
\begin{tabular}{lcccccc|c} 
\toprule
\multicolumn{1}{c}{} & \multicolumn{3}{c}{\texttt{Gaussian Policy}} & \multicolumn{3}{c}{\texttt{Diffusion/Flow Policy}} & \multicolumn{1}{c}{\texttt{w/o Policy}} \\
\cmidrule(lr){2-4} \cmidrule(lr){5-7} \cmidrule(lr){8-8}
\texttt{Dataset}  &  \texttt{TD3+BC}   & \texttt{IQL} & \texttt{IVR} & \texttt{Diffusion-QL} & \texttt{SfBC} & \texttt{FQL} & \texttt{VGF (ours)} \\ 
\hline
\texttt{halfcheetah-m}   & $48.3$ & $47.4$ {\tiny $\pm 0.2$} & $48.3$ {\tiny $\pm 0.2$} & $51.1$ {\tiny $\pm 0.5$} & $45.9$ {\tiny $\pm 2.2$} & $55.6$ {\tiny $\pm 0.2$} & \colorbox{light_mycolor}{$57.1$} {\tiny $\pm 0.1$} \\
\texttt{hopper-m}        & $59.3$ & $66.3$ {\tiny $\pm 5.7$} & $75.5$ {\tiny $\pm 3.4$} & $90.5$ {\tiny $\pm 4.6$} & $57.1$ {\tiny $\pm 4.1$} & $60.6$ {\tiny $\pm 0.1$} & \colorbox{light_mycolor}{$97.9$} {\tiny $\pm 2.0$} \\
\texttt{walker2d-m}      & $83.7$ & $72.5$ {\tiny $\pm 8.7$} & $84.2$ {\tiny $\pm 4.6$} & $87.0$ {\tiny $\pm 0.9$} & $77.9$ {\tiny $\pm 2.5$} & $65.9$ {\tiny $\pm 0.3$} & \colorbox{light_mycolor}{$89.4$} {\tiny $\pm 0.8$} \\
\texttt{halfcheetah-m-r} & $44.6$ & $44.2$ {\tiny $\pm 1.2$} & $44.8$ {\tiny $\pm 0.7$} & $47.8$ {\tiny $\pm 0.3$} & $37.1$ {\tiny $\pm 1.7$} & $48.3$ {\tiny $\pm 0.3$} & \colorbox{light_mycolor}{$49.1$} {\tiny $\pm 0.1$} \\
\texttt{hopper-m-r}      & $60.9$ & $95.2$ {\tiny $\pm 8.6$} & $99.7$ {\tiny $\pm 3.3$} & \colorbox{light_mycolor}{$101.3$} {\tiny $\pm 0.6$} & $86.2$ {\tiny $\pm 9.1$} & $50.7$ {\tiny $\pm 2.7$} & $99.0$ {\tiny $\pm 1.1$} \\
\texttt{walker2d-m-r}    & $81.8$ & $76.1$ {\tiny $\pm 7.3$} & $81.2$ {\tiny $\pm 3.8$} & $95.5$ {\tiny $\pm 1.5$} & $65.1$ {\tiny $\pm 5.6$} & $38.8$ {\tiny $\pm 1.1$} & \colorbox{light_mycolor}{$97.8$} {\tiny $\pm 1.6$} \\
\texttt{halfcheetah-m-e} & $90.7$ & $86.7$ {\tiny $\pm 5.3$} & $94.0$ {\tiny $\pm 0.4$} & $96.8$ {\tiny $\pm 0.3$} & $92.6$ {\tiny $\pm 0.5$} & \colorbox{light_mycolor}{$102.1$} {\tiny $\pm 0.6$} & $99.1$ {\tiny $\pm 0.3$} \\
\texttt{hopper-m-e}      & $98.0$ & $101.5$ {\tiny $\pm 7.3$} & \colorbox{light_mycolor}{$111.8$} {\tiny $\pm 2.2$} & $111.1$ {\tiny $\pm 1.3$} & $108.6$ {\tiny $\pm 2.1$} & $76.7$ {\tiny $\pm 0.6$} & $98.3$ {\tiny $\pm 3.3$} \\
\texttt{walker2d-m-e}    & $110.1$ & \colorbox{light_mycolor}{$110.6$} {\tiny $\pm 1.0$} & $110.0$ {\tiny $\pm 0.8$} & $110.1$ {\tiny $\pm 0.3$} & $109.8$ {\tiny $\pm 0.2$} & $102.6$ {\tiny $\pm 0.2$} & \colorbox{light_mycolor}{$110.5$} {\tiny $\pm 1.5$} \\ 
\hline
\texttt{antmaze-u}       & $78.6$ & $85.5$ {\tiny $\pm 1.9$} & $92.2$ {\tiny $\pm 1.4$} & $93.4$ {\tiny $\pm 3.4$} & $92.0$ {\tiny $\pm 2.1$} & $96$ {\tiny $\pm 1.6$} & \colorbox{light_mycolor}{$98.0$} {\tiny $\pm 1.8$} \\
\texttt{antmaze-u-d}     & $71.4$ & $66.7$ {\tiny $\pm 4.0$} & $74.0$ {\tiny $\pm 2.3$} & $66.2$ {\tiny $\pm 8.6$} & $85.3$ {\tiny $\pm 3.6$} & $89$ {\tiny $\pm 2.3$} & \colorbox{light_mycolor}{$94.3$} {\tiny $\pm 1.4$} \\
\texttt{antmaze-m-p}     & $10.6$ & $72.2$ {\tiny $\pm 5.3$} & $80.2$ {\tiny $\pm 3.7$} & $76.6$ {\tiny $\pm 10.8$} & $81.3$ {\tiny $\pm 2.6$} & $78.0$ {\tiny $\pm 2.6$} & \colorbox{light_mycolor}{$89.4$} {\tiny $\pm 3.1$} \\
\texttt{antmaze-m-d}     & $3.0$ & $71.0$ {\tiny $\pm 3.2$} & $79.1$ {\tiny $\pm 4.2$} & $78.6$ {\tiny $\pm 10.3$} & $82.0$ {\tiny $\pm 3.1$} & $71.0$ {\tiny $\pm 3.4$} & \colorbox{light_mycolor}{$86.7$} {\tiny $\pm 2.8$} \\
\texttt{antmaze-l-p}     & $0.2$ & $39.6$ {\tiny $\pm 4.5$} & $53.2$ {\tiny $\pm 4.8$} & $46.4$ {\tiny $\pm 8.3$} & $59.3$ {\tiny $\pm 14.3$} & \colorbox{light_mycolor}{$84.0$} {\tiny $\pm 2.9$} & {$82.5$} {\tiny $\pm 3.6$} \\
\texttt{antmaze-l-d}     & $0.0$ & $47.5$ {\tiny $\pm 4.4$} & $52.3$ {\tiny $\pm 5.2$} & $56.6$ {\tiny $\pm 7.6$} & $45.5$ {\tiny $\pm 6.6$} & $83.0$ {\tiny $\pm 3.8$} & \colorbox{light_mycolor}{$83.8$} {\tiny $\pm 4.5$} \\
\bottomrule
\end{tabular}
}
% \vspace{-0.3cm}
\end{table}

\section{Experiments}
% \haoran{not done yet}
% The goal of our experiments is to evaluate the efficacy of \methodname{} in improving offline RL and RLHF. To this end, we compare \methodname{} to state-of-the-art methods, and answer the following questions: \textbf{(1)} Does \methodname{} improve performance when compared to using similar-sized networks on benchmark tasks? and 
% \textbf{(2)} How does the use of iterative computation via \methodname{} compare with the use of ``parallel'' computation of a neural network ensemble and ``sequential'' iterative computation driven by ResNets of comparable size? We then run several experiments to understand the behavior of \methodname{} critics. Furthermore, we run a variety of ablation studies to understand the design choices that drive the efficient use of iterative computation, including the roles of \textbf{a)} tuning the width of initial noise sample, \textbf{b)} categorical representations of the interpolant input, and \textbf{c)} Fourier-basis time embeddings. 
The goal of our experiments is to evaluate the efficacy of \methodname{} in improving offline RL and RLHF. We evaluate the performance of \methodname{} on D4RL and OGBench and compare it with prior methods. We also provide an ablation study on important hyperparameters and investigate the adaptive scaling behavior during test time in \methodname{} to gain a deeper understanding of its mechanism.

\subsection{Offline RL Results}
% \noindent \textbf{Offline RL tasks and datasets.} \
% In the offline setting, we evaluate VGF on the D4RL benchmark~\citep{fu2020d4rl} and compare it with several state-of-the-art algorithms. For the evaluation tasks, we select MuJoCo locomotion tasks and AntMaze navigation tasks which require both locomotion and navigation. While MuJoCo tasks are popular in offline RL, AntMaze tasks are more challenging due to their stronger need for selecting optimal parts of different trajectories to perform stitching. For baseline algorithms, we selected state-of-the-art methods not only from traditional methods that use a Gaussian policy but also methods that use diffusion models. 

\textbf{D4RL Benchmark Datasets.} \ We evaluate the performance of VGF on the D4RL benchmark~\citep{fu2020d4rl}, and compare it with several algorithms based on Gaussian policy, diffusion policy and flow policy. Gaussian-policy-based baselines include TD3+BC~\citep{fujimoto2021minimalist}, IQL~\citep{kostrikov2021offline}, and IVR~\citep{xu2023offline}. We also select Diffusion-QL~\citep{wang2022diffusion} and SfBC~\citep{chen2022offline} as diffusion-policy-based baselines, and FQL~\citep{park2025flow} as a flow-policy-based baseline. The evaluation tasks include MuJoCo, a set of locomotion tasks, and AntMaze, a series of navigation tasks. The results in Table~\ref{tab_d4rl_result} show that VGF outperforms all of the baselines in most datasets. Note that FQL has poor performance on some Mujoco datasets even under carefully hyperparameter tuning. We would like to highlight that VGF achieves much higher scores than prior methods on challenging AntMaze datasets.

\textbf{OGBench Datasets.} \ Another benchmark that we use to evaluate is OGBench~\citep{ogbench_park2025}, which provides a variety of goal-conditioned tasks and datasets across robotic locomotion and manipulation. Among the reward-based single-task settings, we select 4 locomotion and 5 manipulation environments, each of which consists of 5 tasks. We take the average score of the 5 tasks to be the indicator of performance in these environments. We compare our results with several of the state-of-the-art algorithms, including ReBRAC~\citep{tarasov2023revisiting}, Flow BRAC~\citep{wu2019behavior}, IDQL~\citep{hansen2023idql} and FQL~\citep{park2025flow}. ReBRAC leverages a monolithic Q network and a Gaussian policy and achieves competitive performance on offline RL datasets. Flow BRAC replaces Gaussian policy with flow policy in behavior-regularized actor-critic algorithms. IDQL trains a diffusion BC model along with a value function learned by IQL to do best-of-$N$ sampling. FQL is a recently proposed method that introduces a one-step flow policy that distills from the BC flow policy to avoid unstable gradient backpropagation through time. 
Table~\ref{tab_ogbench_result} summarizes the results of VGF and its comparison with baselines on OGBench. VGF achieves better performance than prior methods in most of the environments, especially those hard ones where FQL attains performance below 50\% success rate (\texttt{cube-double}, \texttt{puzzle-3x3} and \texttt{puzzle-4x4}).

\textbf{Online Finetuning.} \ We evaluate the efficacy of VGF in the online RL fine-tuning setting. Here, we first train agents offline for 1M steps and subsequently fine-tune them online for an additional 1M steps. Across all challenging tasks, we find that VGF provides a stronger initialization compared to FQL and also adapts more rapidly during online interaction and converges to higher final performance.

\begin{table}[t]
% \vspace{-10pt}
\caption{
\footnotesize 
\textbf{OGBench offline RL results}. 
Scores are averaged over the final 10 evaluations across 5 seeds with standard deviation reported, we highlight the best score in integer-level. \methodname{} achieves competitive or superior performance compared to prior approaches, especially on hard tasks.}
\label{tab_ogbench_result}
\centering
\vspace{-0.1cm}
\resizebox{0.95\linewidth}{!}
{
\begin{threeparttable}
\begin{tabular}{lcccccc|c}
\toprule
\multicolumn{1}{c}{} &
\multicolumn{3}{c}{\texttt{Gaussian Policy}} &
\multicolumn{3}{c}{\texttt{Diffusion/Flow Policy}} &
\multicolumn{1}{c}{\texttt{w/o Policy}} \\
\cmidrule(lr){2-4} \cmidrule(lr){5-7} \cmidrule(lr){8-8}
\texttt{Dataset (5 tasks)} &
\texttt{BC} & \texttt{IQL} & \texttt{ReBRAC} &
\texttt{FBRAC} & \texttt{IDQL} & \texttt{FQL} &
\texttt{VGF (ours)} \\
\midrule

% \texttt{antmaze-large}  &
% $11 \pm 1$ & $53 \pm 3$ & \colorbox{light_mycolor}{$81 \pm 5$} &
% $60 \pm 6$ & $21 \pm 5$ & $79 \pm 3$ & $77 \pm 5$ \\

\texttt{antmaze-giant}  &
$0 \pm 0$ & $4 \pm 1$ & \colorbox{light_mycolor}{$26 \pm 8$} &
$4 \pm 4$ & $0 \pm 0$ & $9 \pm 6$ & $3 \pm 1$ \\

\texttt{humanoidmaze-medium}  &
$2 \pm 1$ & $33 \pm 2$ & $22 \pm 8$ &
$38 \pm 5$ & $1 \pm 0$ & $58 \pm 5$ & \colorbox{light_mycolor}{$72 \pm 1$} \\

\texttt{humanoidmaze-large}  &
$1 \pm 0$ & $2 \pm 1$ & $2 \pm 1$ &
$2 \pm 0$ & $1 \pm 0$ & $4 \pm 2$ & \colorbox{light_mycolor}{$15 \pm 2$} \\

\texttt{antsoccer-arena}  &
$1 \pm 0$ & $8 \pm 2$ & $0 \pm 0$ &
$16 \pm 1$ & $12 \pm 4$ & $60 \pm 2$ & \colorbox{light_mycolor}{$63 \pm 4$} \\

\texttt{cube-single}  &
$5 \pm 1$ & $83 \pm 3$ & $91 \pm 2$ &
$79 \pm 7$ & $95 \pm 2$ & \colorbox{light_mycolor}{$96 \pm 1$} & \colorbox{light_mycolor}{$96 \pm 1$} \\

\texttt{cube-double}  &
$2 \pm 1$ & $7 \pm 1$ & $12 \pm 1$ &
$15 \pm 3$ & $15 \pm 6$ & $29 \pm 2$ & \colorbox{light_mycolor}{$70 \pm 8$} \\

\texttt{scene} &
$5 \pm 1$ & $28 \pm 1$ & $41 \pm 3$ &
$45 \pm 5$ & $46 \pm 3$ & $56 \pm 2$ & \colorbox{light_mycolor}{$60 \pm 1$} \\

\texttt{puzzle-3x3}  &
$2 \pm 0$ & $9 \pm 1$ & $21 \pm 1$ &
$14 \pm 4$ & $10 \pm 2$ & $30 \pm 1$ & \colorbox{light_mycolor}{$75 \pm 4$} \\

\texttt{puzzle-4x4} &
$0 \pm 0$ & $7 \pm 1$ & $14 \pm 1$ &
$13 \pm 1$ & $29 \pm 3$ & $17 \pm 2$ & \colorbox{light_mycolor}{$45 \pm 4$} \\

\bottomrule
\end{tabular}
\end{threeparttable}
}
\vspace{-0.1cm}
\end{table}

\begin{figure}[t]
\centering
% \vspace{-0.5cm}
\includegraphics[width=1.0\textwidth]{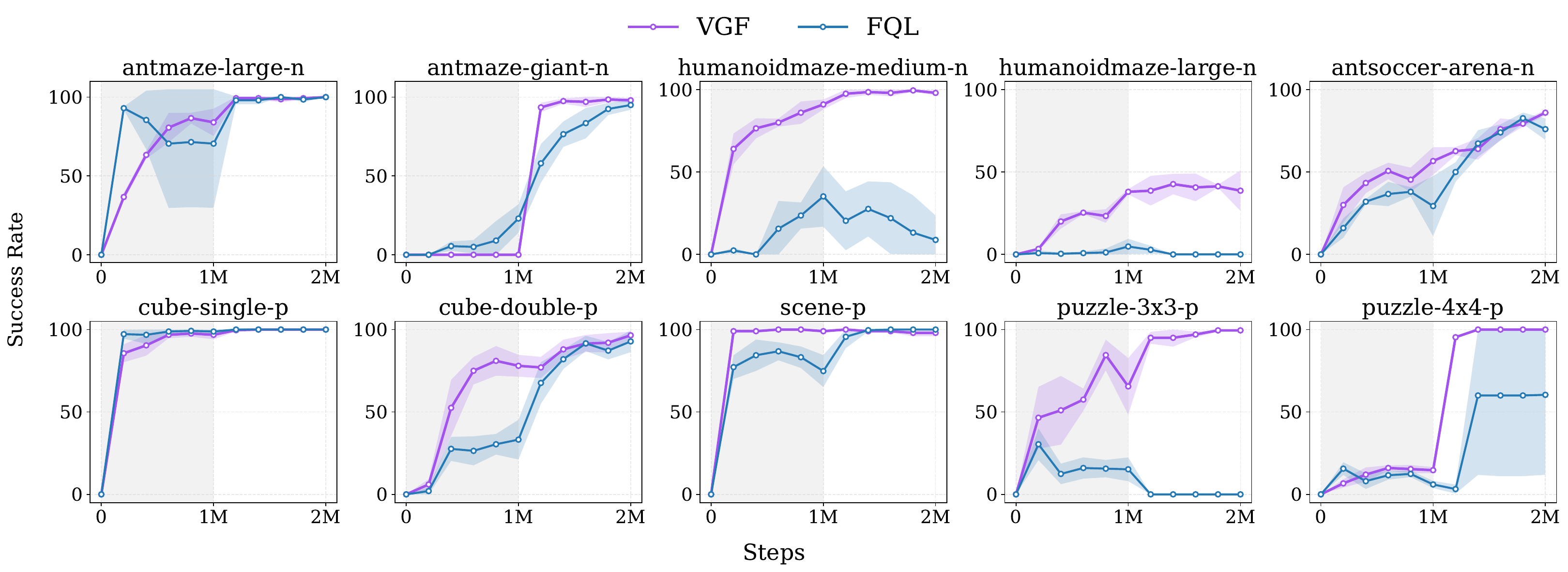}
% \vspace{-0.1cm}
\caption{\footnotesize 
\textbf{OGBench offline-to-online RL results.} Learning curves for online fine-tuning of VGF and FQL across all default tasks. VGF not only provides a stronger initialization from offline training but also leads to faster adaptation and higher final success rates. The shaded gray area denotes offline training.
}
\label{fig:vgf_off2on}
% \vspace{-0.2cm}
\end{figure}

\subsection{RLHF Results}

% \noindent \textbf{RLHF tasks and datasets.} 
\begin{wraptable}{r}{0.5\textwidth}
\vspace{-0.4cm} % tighten top gap (optional)
\centering
\footnotesize
\begin{tabular}{lcc}
\toprule
{} & \texttt{TL;DR} & \texttt{Anthropic-HH} \\
\cmidrule(lr){2-2} \cmidrule(lr){3-3}
\texttt{Model} & WR\% (vs ref) & WR\% (vs chosen)  \\
\midrule
\texttt{Pythia-SFT} & $48.5$ & $46.2$  \\
\midrule
\texttt{PPO}      & $57.3$ & $45.5$  \\
\texttt{DPO}      & $61.2 $ & $51.5 $  \\
\texttt{Best-of-$N$}  & $58.3 $ & $49.0 $  \\
\texttt{VGF (Ours)} & \cellcolor{light_mycolor}$68.1$ &
\cellcolor{light_mycolor}$59.0  $  \\
\bottomrule
\end{tabular}
\caption{\textbf{RLHF results}. \methodname{} outperforms baseline RLHF methods by having higher win-rates on TL;DR and Anthropic-HH dataset.}
\label{tab:rlhf}
\vspace{-6pt} % tighten bottom gap (optional)
\end{wraptable}
We report results on the TL;DR Summarize~\citep{stiennon2020learning} and Anthropic Helpful and Harmless Dialogue~\citep{bai2022} datasets. 
The training split of \texttt{TL;DR} dataset contains 116k human-written instructions and 93k human-annotated preference pairs. The pre-processed \texttt{Anthropic-HH} dataset contains 112k training preference pairs.
Both the reward model and the SFT policy are initialized from the same Pythia-2.8B base model, trained on either human demonstrations (\texttt{TL;DR}) or the chosen responses (\texttt{Anthropic-HH}). For evaluation metrics, we calculate win rates (WR\%) judged by GPT-4 comparing outputs from the initialization and aligned models. As shown in Table~\ref{tab:rlhf}, \methodname{} outperforms all baseline RLHF methods by a large margin.

\subsection{Understanding the Properties and Behavior of \methodname{}}
% To better understand the benefits of iterative compute in \methodname, we analyze its scaling behavior and compare it to different scaling approaches. Specifically, we study: \textbf{(i)} how the number of flow-integration steps controls the expressivity of \methodname; \textbf{(ii)} how \methodname’s iterative computation compares to monolithic critic scaling; and \textbf{(iii)} the importance of applying supervision to the velocity field at every flow step.
% We present our ablations and findings via the following Q\&As.

To better understand the mechanism and behavior of VGF, we investigate the importance of hyperparameter selection and the test-time scaling property by conducting some ablation study. 
% We present our analysis and findings via a Q\&A style.

\noindent \textbf{What are the important hyperparameters of VGF?} \
There are three hyperparameters in VGF: train-time flow steps $L_{\mathrm{train}}$, step size $\epsilon$ and particle number $N$, with $L_{\mathrm{train}}$ being the most important one. 
Train-time flow steps $L_{\mathrm{train}}$ is the number of flow steps we adopt during training. $L$ is directly related to the distance between the reference policy and the learned policy. Intuitively, a larger $L$ means deviating more from the reference policy. In \Cref{fig:vgf_step}, we show that optimal $L_{\mathrm{train}}$ needs to be tuned per task to achieve the best performance.
% Temperature $\alpha$ in \Cref{eq_svgd} serves as the trade-off between reward maximization and dispersing particles. Specifically, a smaller $\alpha$ drives the actions towards the area with higher reward, but with a risk of collapsing to a single mode. The value of $\alpha$ is tuned for each task to seek a balance between reward maximization and dispersion.
% Particle number $N$ affects the multi-modality of the implicit policy in \methodname{}. We find that choosing a small number is enough to represent an expressive distribution. We choose $N=5$ in our experiments and find that setting it to 10 or more has no big difference.
% Below we provide a brief explanation for their importance and provide an ablation study on the effect of $L_{\mathrm{train}}$ in \Cref{fig:vgf_step}. 

\noindent \textbf{Can VGF enable adaptive test-time scaling behavior?} \
The answer is yes by controlling the test-time flow steps $L_{\mathrm{test}}$. 
However, the optimal $L_{\mathrm{test}}$ depends on the specific dataset. 
% As displayed in Figure~\ref{fig:test_time_scaling}, scaling up the number of test-time steps turns out to harm performance in some datasets (e.g., halfcheetah-m-e), while in other datasets (e.g., walker2d-m-r and antmaze-m-d), it can be beneficial. We conjecture that 
In general, scaling up the test-time flow steps is helpful when the value function generalizes well to OOD regions and the offline dataset is of low quality, which requires improving over the reference policy to achieve the best performance.
Note that even when the value function has large extrapolation error, by setting the test-time flow steps to 0, VGF reduces to the best-of-N sampling method, but it can still outperform the reference policy, enabling in-distribution generalization owing to the training of a value function via TD learning.
%%AZ.9.24: but this doesn't explain why more steps harms performance in other settings?

\begin{figure}[t]
\centering
% \vspace{-0.5cm}
\includegraphics[width=0.95\textwidth]{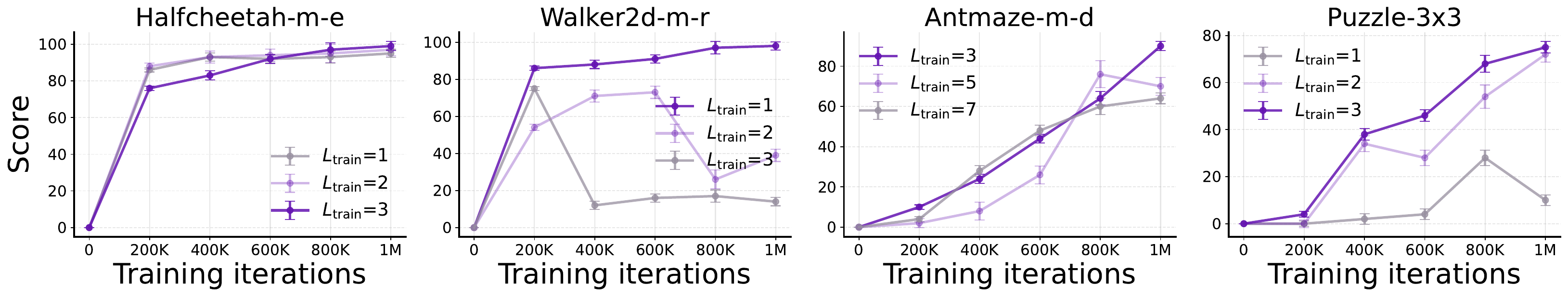}
% \vspace{-0.1cm}
\caption{\footnotesize 
Ablation study on \methodname{} train-time flow steps $L_{\textrm{train}}$.
}
\label{fig:vgf_step}
% \vspace{-0.2cm}
\end{figure}

\begin{figure}[t]
\centering
% \vspace{-0.5cm}
\includegraphics[width=0.95\textwidth]{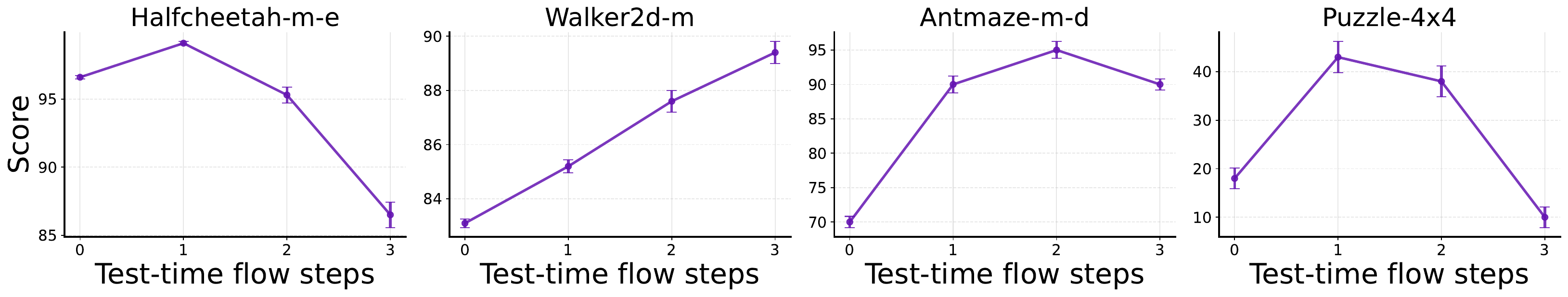}
% \vspace{-0.1cm}
\caption{\footnotesize 
VGF enables adaptive test-time scaling behavior by adjusting test-time flow steps $L_{\textrm{test}}$.
}
\label{fig:test_time_scaling}
% \vspace{-0.2cm}
\end{figure}

% \begin{wrapfigure}{r}{0.6\textwidth}
%     \centering
%     \includegraphics[width=0.6\textwidth]{figures/vgf_steps.pdf}
%     \caption{Ablation study on the effect of VGF iteration step number $L$.}
%     \label{fig:vgf_step}
% \end{wrapfigure}

% \begin{wrapfigure}{r}{0.6\textwidth}
%     \vspace{-3pt}
%     \centering
%     \includegraphics[width=0.6\textwidth]{figures/test_time_scaling.pdf}
%     \caption{Test-time scaling of VGF on different tasks.}
%     \label{fig:test_time_scaling}
% \end{wrapfigure}

\section{Limitations and Future Work}
% Better (Long-horizon) exploration via state occupancy measure or intrinsic rewards
% Better exploration using generative models.
% Better value learning as the KL divergence is still too conservative
In this paper, we propose \methodname{}, a new scalable paradigm that casts behavior-regularized RL as optimal transport from the reference distribution to the optimal policy distribution induced by the value function, where the transport budget serves as the implicit behavior regularization.
\methodname{} uses particle-based gradient flow as the practical solution, yielding an implicit policy with multimodal expressivity while bypassing the need of policy reparameterization. 
\methodname{} is easy to implement, enabling adaptive test-time scaling and achieves strong empirical results on both offline RL and RLHF tasks.
We believe that \methodname{} represents a concrete step toward building unified and scalable behavior-regularized RL algorithms.
One limitation of \methodname{} lies in its ability to handle scenarios where the reference distribution is heavily skewed toward suboptimal behavior. Using technique like distribution reweighting~\citep{xu2025optimal} to enhance performance is one future work.
Another promising direction is to integrate VGF with methods that further improve the expressiveness of the value function~\citep{agrawalla2025floq,dong2025value,dong2026tql}, with the goal of boosting performance on long-horizon tasks.

% \haoran{whether this works in the online or offline-to-online setting? How to train $\mu$ in the online setting is crucial.}

% \section{Reproducibility Statement}
% To ensure the reproducibility of this paper, we detail the theoretical and empirical parts of our results in the main paper and the appendix. In Section \ref{sec:vgf}, we introduce the basic notations used in the theoretical analysis and establish theories to support our claim. Furthermore, in Appendix \ref{sec:omitted_proof}, we provide the detailed proofs of the theoretical results. For empirical details, we briefly introduce the setup of our toy case in Section \ref{sec:vgf}. In Appendix \ref{sec_exp_details}, we elaborate on the benchmark environments, the network architecture and  hyperparameters in our experiments, and provide a more detailed version of empirical results on OGBench. In Appendix \ref{sec_code}, we provide a simple implementation of our method. We will release the code after acceptance. 

\section*{Acknowledgement}
% This work is supported by National Key Research and Development Program of China under Grant (2022YFB2502904). 
We thank members from MIDI lab for discussions on the method and feedback on the early draft of the paper.
This work is partially supported by NSF 2340651, NSF 2402650, DARPA HR00112490431, ARO W911NF-24-1-0193, U. S. Army Research Laboratory and the U. S. Army Research Office under Grant W911NF2010219 and ONR N00014-26-1-2055. HX is supported by the Amazon Fellowship. This research used the computational cluster resource provided by the Texas Advanced Computing Center at UT Austin and Jetstream2 at Indiana University through allocation CIS250850 from the Advanced Cyberinfrastructure Coordination Ecosystem: Services \& Support (ACCESS) program, which is supported by National Science Foundation grants \#2138259, \#2138286, \#2138307, \#2137603, and \#2138296.

{\footnotesize
\bibliographystyle{iclr2026_conference}
\bibliography{iclr2026_conference}
}

%%%%%%%%%%%%%%%%%%%%%%%%%%%%%%%%%%%%%%%%%%%%%%%%%%%%%%%%%%%%

\appendix

\newpage
\newtheorem*{thm1}{\textup{\textbf{\Cref{thm:mmd_bound}}}}
\newtheorem*{thm2}{\textup{\textbf{\Cref{thm:support_shift}}}}
\section{Proof}\label{sec:omitted_proof}

\subsection{Proof of Theorem~\ref{thm:mmd_bound}}
\begin{thm1}
    Assume the value function $R(s,a)$ is $c$-Lipschitz w.r.t the input action $a$. Define the implicit policy that performs \Cref{eq_svgd} for $L$ steps with $N$ particles as $\pi_{N}^{L}$. We have
    \begin{equation*}
        \mathrm{MMD}^2(\mu, \pi_{N}^{L}) = \mathrm{MMD}^2(\pi_{N}^{0}, \pi_{N}^{L}) \leq \frac{2\epsilon L}{\sigma\sqrt{e}}\left(\frac{c}{\alpha} + \frac{1}{\sigma\sqrt{e}}\right).
    \end{equation*}
\end{thm1}
\begin{proof}
    When $R(s,a)$ is $c$-Lipschitz, we know that $\|\nabla_{a_j} R(s,a_j)\|_\infty \leq c$. Additionally, we have another key observation that the kernel $k$ also has Lipschitz property. Formally,
    \begin{align*}
        k(x+\delta,y) - k(x,y) = \exp(-\|x-y\|^2/2\sigma^2) - \exp(-\|x + \delta - y\|^2/2\sigma^2) \leq \frac{\|\delta\|_\infty}{\sigma\sqrt{e}}.
    \end{align*}
    In other words, the kernel $k$ is $\frac{1}{\sigma\sqrt{e}}$-Lipschitz. Combining these two properties, we have
    \begin{align*}
        \|h(s,x)\|_\infty &= \|\mathbb{E}_{a_j \sim \mu_0} \left[ k(a_j, x) \nabla_{a_j} R(s, a_j) / \alpha + \nabla_{a_j} k(a_j, x) \right]\|_\infty \\
        &\leq \sup_{a_j} k(a_j,x) \cdot \frac{c}{\alpha} + \sup_{a_j} \|\nabla_{a_j} k(a_j, x)\|_\infty \\
        &\leq \frac{c}{\alpha} + \frac{1}{\sigma\sqrt{e}}.
    \end{align*}
    The square of MMD distance is calculated as follows:
    \begin{align*}
        \text{MMD}^2(\pi_N^L,\pi_N^0) = \mathbb{E}_{x,y \sim \pi_N^0} k(x,y) + \mathbb{E}_{x,y \sim \pi_N^L} k(x,y) - 2\mathbb{E}_{x \sim \pi_N^0, y \sim \pi_N^L} k(x,y).
    \end{align*}
    Define $K:=\frac{1}{\sigma\sqrt{e}}$ and $H:=\frac{c}{\alpha} + \frac{1}{\sigma\sqrt{e}}$. Let us consider the condition of a single pair of particles $x, y$ after one iteration. We have
    \begin{align*}
        &\quad k(x, y) + k(x+\epsilon h(x), y+\epsilon h(y)) - k(x+\epsilon h(x), y) - k(x, y+\epsilon h(y)) \\
        &\leq |k(x, y) - k(x+\epsilon h(x), y)| + |k(x+\epsilon h(x), y+\epsilon h(y)) - k(x, y+\epsilon h(y))| \\
        &\leq 2\epsilon KH.
    \end{align*}
    Denote $x^{(k)}$ as the particle $x$ after the $k$-th iteration, we have
    \begin{align*}
        \text{MMD}^2(\pi_N^L,\pi_N^0) &= \mathbb{E}_{x,y \sim \pi_N^0} k(x,y) + \mathbb{E}_{x,y \sim \pi_N^L} k(x,y) - 2\mathbb{E}_{x \sim \pi_N^0, y \sim \pi_N^L} k(x,y) \\
        &= \frac{1}{n^2} \sum_{x,y} \left[k(x^{(0)},y^{(0)}) + k(x^{(L)},y^{(L)}) - k(x^{(L)},y^{(0)}) - k(x^{(0)},y^{(L)})\right] \\
        &\leq 2\epsilon KHL.
    \end{align*}
\end{proof}

% \begin{proposition}
%     If the value function $Q(s,a)$ is $c$-Lipschitz with regard to the action $a$, after $L$ iterations of SVGD updates, the target distribution $\mu^L$ has a bounded distance from the behavioral distribution $\mu^0$ of $2\epsilon KHL$, where $\epsilon$ is the update step size, $K:=\frac{1}{\sigma\sqrt{e}}$, and $H:=\frac{c}{\alpha} + \frac{1}{\sigma\sqrt{e}}$.
% \end{proposition}

% \begin{proof}
%     The square of MMD distance is calculated as follows:
%     \begin{align*}
%         \text{MMD}^2(\mu^L,\mu^0) = \mathbb{E}_{x,y \sim \mu^0} k(x,y) + \mathbb{E}_{x,y \sim \mu^L} k(x,y) - 2\mathbb{E}_{x \sim \mu^0, y \sim \mu^L} k(x,y).
%     \end{align*}
%     First, let us consider the condition of a single pair of particles $x, y$ after one iteration. We have
%     \begin{align*}
%         &\quad k(x, y) + k(x+\epsilon h(x), y+\epsilon h(y)) - k(x+\epsilon h(x), y) - k(x, y+\epsilon h(y)) \\
%         &\leq |k(x, y) - k(x+\epsilon h(x), y)| + |k(x+\epsilon h(x), y+\epsilon h(y)) - k(x, y+\epsilon h(y))| \\
%         &\leq 2\epsilon KH.
%     \end{align*}
%     Denote $x^{(k)}$ as the particle $x$ after the $k$-th iteration, we have
%     \begin{align*}
%         \text{MMD}^2(\mu^L,\mu^0) &= \mathbb{E}_{x,y \sim \mu^0} k(x,y) + \mathbb{E}_{x,y \sim \mu^L} k(x,y) - 2\mathbb{E}_{x \sim \mu^0, y \sim \mu^L} k(x,y) \\
%         &= \frac{1}{n^2} \sum_{x,y} \left[k(x^{(0)},y^{(0)}) + k(x^{(L)},y^{(L)}) - k(x^{(L)},y^{(0)}) - k(x^{(0)},y^{(L)})\right] \\
%         &\leq 2\epsilon KHL.
%     \end{align*}
% \end{proof}

\subsection{Proof of Theorem~\ref{thm:support_shift}}
\begin{thm2}
Define the $\epsilon$-support of a distribution $P$ as $\mathrm{supp}_\epsilon(P) := \{x:p(x) \geq \epsilon\}$. We have
\begin{equation*}
    \mathrm{supp}_\epsilon(\pi_{N}^{L}(\cdot|s)) \not\subseteq \mathrm{supp}_\epsilon(\mu(\cdot|s)).
\end{equation*}
\end{thm2}
The proof can be divided into two settings, where the policy distribution is discrete and continuous. We first present the simple proof for the discrete setting.

\begin{proof}
    Denote the support of the behavioral policy as $\text{supp}(\pi_N^0)=\{a_1,...,a_N\}$. If the support of the learned policy from one-step SVGD $\text{supp}(\pi_N^1)$ is a subset of $\text{supp}(\pi_N^0)$, then we know that $a_1$ is updated to $a_i \in \text{supp}(\pi_N^0)$. This indicates that
    \[
    a_i = a_1 + \epsilon h(a_1) = a_1 + \epsilon \mathbb{E}_{a_j \sim \pi_N^0} \left[ k(a_j, a_1) \nabla_{a_j} R(s, a_j) / \alpha + \nabla_{a_j} k(a_j, a_1) \right].
    \]
    Note that a little disturbance in any of the dimensions of $\nabla_{a_j} R(s, a_j)$ will make the equation invalid when the other derivatives of $R$ are fixed. Therefore, the equation is almost surely invalid.
\end{proof}

% \begin{definition}
%     For a continuous distribution $P$, we define its $\epsilon$-support $S$ as the subset of the action space where the probability density function $p$ has a value no less than $\epsilon$ for any element in $S$, i.e., $\forall a \in S$, $p(a) \geq \epsilon$.
% \end{definition}

% We need an assumption for the $Q$ value function. Since $a$ is viewed as a random variable that follows the distribution $\mu$, $\nabla_{a} Q(s, a)$ can also be viewed as a random variable following a corresponding distribution $\mu_{grad}$. We assume that for any $x \in \mathbb{R}^d$, we have $\mu_{grad}(x) \leq b$. Intuitively, this is reasonable because before convergence, particles are likely to have different gradients from $Q$, causing them to move in different directions. This means that the gradient of $Q$ should not collapse to one point but instead be diverse. With this assumption, we have the following result.

% \begin{proposition}[continuous version]
%     Assume that the policies before and after an SVGD update both follow gaussian distributions, i.e., $\pi_1 \sim \mathcal{N}(\mu_1,\sigma_1^2), \pi_2 \sim \mathcal{N}(\mu_2, \sigma^2_2)$, then there exists a particle in the $c$-support of $\pi_1$ that moves out of it in $\pi_2$.
% \end{proposition}
As for the continuous setting, we typically assume that the policies before and after an SVGD update both follow Gaussian distributions, which is a widely used assumption. Formally, we assume that $\pi_N^0 \sim \mathcal{N}(\mu_1,\sigma_1^2), \pi_N^1 \sim \mathcal{N}(\mu_2, \sigma^2_2)$.

\begin{proof}
    We consider the gradient of an arbitrary particle $a$ sampled from $\pi_N^0$. We know that 
    \[
    h(a) = \mathbb{E}_{a_j \sim \pi_N^0} \left[ k(a_j, a) \nabla_{a_j} R(s, a_j) / \alpha + \nabla_{a_j} k(a_j, a) \right].
    \]
    Let us investigate the first term. We also know that
    \begin{align*}
        \nabla_{a_j} R(s, a_j) &= \nabla_{a_j} \log \pi_N^1(a_j|s) \\
        &= \nabla_{a_j} \log \left[\frac{1}{\sqrt{2\pi}\sigma_2} \exp(-\frac{(a_j-\mu_2)^2}{2\sigma^2_2})\right] \\
        &= \frac{-\frac{1}{\sqrt{2\pi}\sigma_2} \exp(-\frac{(a_j-\mu_2)^2}{2\sigma^2_2}) \frac{a_j-\mu_2}{\sigma^2_2}}{\frac{1}{\sqrt{2\pi}\sigma_2} \exp(-\frac{(a_j-\mu_2)^2}{2\sigma^2_2})} \\
        &= \frac{\mu_2-a_j}{\sigma^2_2}.
    \end{align*}
    Then, we calculate the second term:
    \begin{align*}
        \nabla_{a_j} k(a_j, a) &= k(a_j, a) \frac{a_j-a}{\sigma^2_1}.
    \end{align*}
    Combining both terms, we have
    \begin{align*}
        h(a) &= \mathbb{E}_{a_j \sim \pi_N^0} \left[k(a_j, a)(\frac{\mu_2-a_j}{\alpha \sigma^2_2} + \frac{a_j-a}{\sigma^2_1})\right] \\
        &= \mathbb{E}_{a_j \sim \pi_N^0} \left[k(a_j, a)\frac{\mu_1-a_j}{\alpha \sigma^2_2}\right] + \mathbb{E}_{a_j \sim \pi_N^0} \left[k(a_j, a)(\frac{\mu_2-\mu_1}{\alpha \sigma^2_2} + \frac{a_j-a}{\sigma^2_1})\right] \\
        &= \mathbb{E}_{a_j \sim \pi_N^0} \left[k(a_j, a)(\frac{\mu_2-\mu_1}{\alpha \sigma^2_2} + \frac{a_j-a}{\sigma^2_1})\right].
    \end{align*}
    The last equation above is because of the symmetry of Gaussian distributions. We now focus on the first dimension of particles. Without loss of generality, we assume that $\mu_{1,1} \leq \mu_{2,1}$. From the $\epsilon$-support of $\pi_N^0$, which is a closed region, we choose $a$ with the smallest value in the first dimension, i.e., $(a-x)_1 \leq 0$ for any $x \in \text{supp}(\pi_N^0)$. This indicates that the first dimension of $h(a)$ is strictly greater than zero, which means that the updated particle of $a$ is out of the $\epsilon$-support of $\pi_N^0$.
\end{proof}

% Empirically, during the training process, we can record the Wasserstein distance between the two distributions, showing that it is constantly declining. However, since we mainly deal with particles, the KL divergence can be hard to compute. We want to verify in experiments that the KL divergence is truly a large value when supports are different.

\section{Experimental Details}
\label{sec_exp_details}

\subsection{Offline RL Evaluation Details}
\noindent \textbf{Environments, tasks, and datasets.}  \
In the offline setting, VGF is evaluated on different kinds of datasets from various environments.

For MuJoCo environments, we have the following datasets.
\begin{itemize}[leftmargin=1.5em, labelsep=0.5em]
\item \texttt{halfcheetah/hopper/walker2d-m} (medium): Collected by a policy with moderate performance, typically reaching around one-third of expert returns. These datasets represent structured but suboptimal behavior.

\item \texttt{halfcheetah/hopper/walker2d-m-r} (medium-replay): Contains the replay buffer of the mediocre SAC policy. It includes a wide range of off-policy transitions, many of which are suboptimal or noisy.

\item \texttt{halfcheetah/hopper/walker2d-m-e} (medium-expert): A 50-50 mixture of medium and expert trajectories. These datasets are designed to test whether algorithms can leverage near-optimal data when it is partially present.
\end{itemize}

The AntMaze environments involve a quadruped ant navigating through a 2D maze using sparse goal-based rewards. The agent has a 29-dimensional state space and an 8-dimensional action space, corresponding to joint positions, velocities, and target location encoding. The tasks are particularly challenging due to long-horizon planning and sparse supervision.

\begin{itemize}[leftmargin=1.5em, labelsep=0.5em]

\item \texttt{antmaze-u} (umaze): A small maze where the agent must reach a fixed goal location using sparse rewards. The environment is relatively easy due to short trajectories.

\item \texttt{antmaze-u-d} (umaze-diverse): Similar to \texttt{umaze}, but with broader trajectory diversity collected from random exploration.

\item \texttt{antmaze-m-p} (medium-play): A medium-sized maze where data is collected via a play policy. The task is harder due to longer horizons and sparse goal rewards.

\item \texttt{antmaze-m-d} (medium-diverse): Features more diverse and noisy behavior than \texttt{medium-play}, increasing exploration coverage but decreasing consistency.

\item \texttt{antmaze-l-p} (large-play): A large maze with random play data. The agent must navigate long distances, making the task especially difficult under sparse reward signals.

\item \texttt{antmaze-l-d} (large-diverse): Similar to \texttt{large-play}, but with broader and more varied behavior. It is one of the most challenging offline datasets due to the size of the environment and variability in data.
\end{itemize}

OGBench not only substantially extends the original AntMaze environment provided by the D4RL benchmark, but also introduces more challenging tasks, such as humanoid control and object manipulation. We elaborate on our selected environments below.

\begin{itemize}[leftmargin=1.5em, labelsep=0.5em]

\item \texttt{antmaze-giant-navigate}: An 8-DoF ant agent needs to reach a goal location in a 2-D maze, the size of which is substantially larger than those of D4RL datasets.

\item \texttt{humanoid-medium/large-navigate}: This task involves full-body control of a 21-DoF Humanoid agent, which requires long-horizon reasoning.

\item \texttt{antsoccer-arena-navigate}: This task involves controlling an ant agent to dribble a soccer ball. The agent is required to approach the ball and dribble it to random locations in an arena.

\item \texttt{cube-single/double-play}: This task involves pick-and-place manipulation of several different cubes. The agent is required to complete tasks by moving, stacking, swapping, or permuting cubes. The options "single" and "double" refer to the number of cubes.

\item \texttt{scene-play}: The agent's goal to manipulate the two buttons to determine the statuses of a window and a drawer to finish a specific task, which requires sequential reasoning.

\item \texttt{puzzle-3$\times$3/4$\times$4-play}: The goal is to solve a "light-out" puzzle with a robot arm. At each step, the agent can press a button to change the states of the pressed button and its neighbors. We select $3 \times 3$ and $4 \times 4$ as the puzzle size.
\end{itemize}

\begin{figure}[t]
\centering
% \vspace{-0.3cm}
\includegraphics[width=0.97\textwidth]{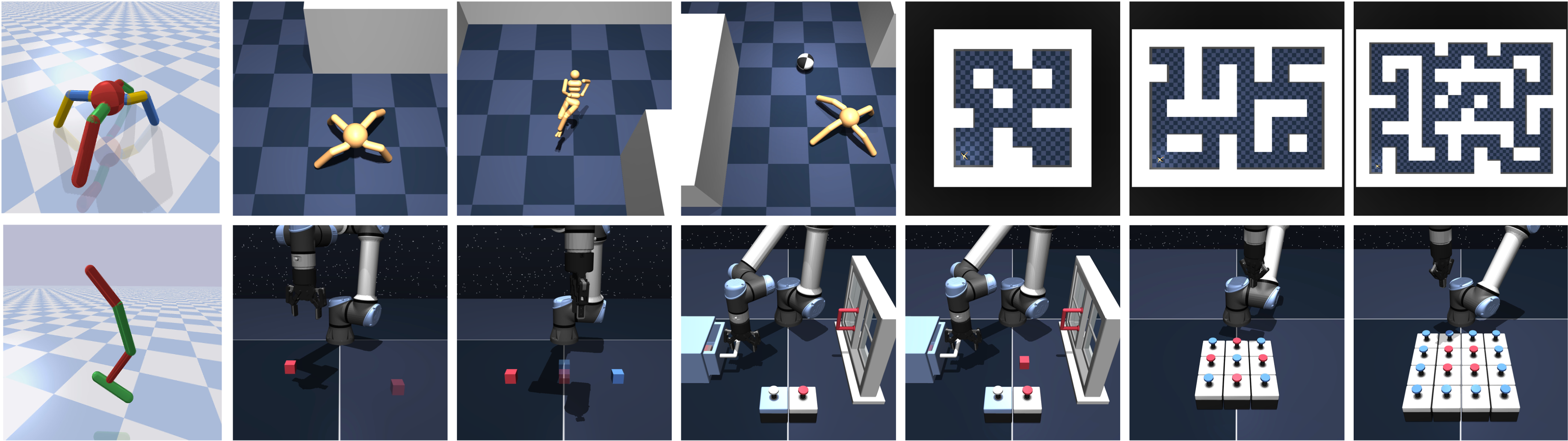}
% \vspace{-0.3cm}
\caption{Visualization of offline RL tasks.} 
% \label{fig_off2on_result}
% \vspace{-0.2cm}
\end{figure}

\noindent \textbf{Network Architecture and hyperparameters.} \
In this part, we provide details of the hyperparameters used in our experiments on D4RL datasets and OGBench. Since most of the hyperparameters remain the same for all of our experiments, we list them in \Cref{table_default_hyp} and demonstrate our choice of the important ones in \Cref{tab_hyperparameters_d4rl} and \Cref{tab_hyperparameters_ogbench}. Specifically, we regard VGF step size $\epsilon$ and training VGF steps $L_{\mathrm{train}}$ as the important hyperparameters.

We use a 4-layer MLP with 512 hidden units and Adam optimizer~\citep{adam} with a learning rate of $3\times10^{-4}$ for both the behavior cloning network and the critic network in all tasks. We also use a target network with soft update weight $5\times10^{-3}$ for critic update. We ran VGF for $10^6$ gradient steps for all our experiments with batch size 256. We set BC flow steps to 10. In addition, we set particle number $N$ to 5 because increasing $N$ will bring about extra computational costs but hardly any improvement in performance. Our results are averaged across 5 seeds for all our experiments, with standard deviation reported. In \Cref{table:full_ogbench_results}, we further provide detailed results on each of the $5\times9=45$ tasks in OGBench, which further verifies VGF's efficacy across most of the environments.

\subsection{RLHF Experiment Details}
We evaluate on the TL;DR Summarize corpus~\citep{stiennon2020learning} and the Anthropic Helpful \& Harmless (HH) corpus~\citep{bai2022}. To reduce degenerate generations that miss an eos token, we filter out overly long prompts before training and evaluation: we discard prompts longer than 448 tokens for TL;DR and 348 tokens for HH (lengths measured after tokenization with the base model's tokenizer). For TL;DR we use the dedicated SFT split provided by the dataset.
Unless otherwise noted, both the reward model and the policy are initialized from the same SFT checkpoint. For Pythia models we train the SFT stage for 2 epochs with an initial learning rate of $\times10^{-5}$ on both summarization and dialogue tasks. 
We train the reward model for 1 epoch with an initial learning rate of $1\times10^{-5}$ using the same train splits as the policy initialization (TL;DR references or HH chosen responses).
For both SFT and reward model training we use cosine learning-rate decay with a warm-up ratio of 0.03. All other implementation details are kept identical across tasks unless explicitly stated.

\begin{table}[t]
\caption{
\footnotesize
Default Hyperparameters for VGF.
}
\vspace{5pt}
\label{table_default_hyp}
\begin{center}
\scalebox{0.93}
{
\begin{tabular}{l|l}
    \toprule
    Hyperparameter & Value \\
    \midrule
    Critic learning rate & $0.0003$\\
    Actor learning rate & $0.0003$\\
    Gradient steps & $1000000$ \\
    Batch size & $256$ \\
    Critic Hidden dimensions & $[512, 512, 512, 512]$ \\
    Discount factor $\gamma$ & $0.99$ (default), $0.995$ (antmaze-giant, humanoidmaze, antsoccer)   \\
    BC flow steps & $10$ \\
    Q value Aggregation & $\mathrm{min}$ (D4RL and OGBench antmaze), $\mathrm{mean}$ (OGBench others) \\
    Particle number $N$ & $5$ \\
    Train particle select & $\mathrm{mean}$ \\
    Eval particle select & $\mathrm{max}$ \\
    Test VGF steps $L_{\mathrm{test}}$ & $[0, 1, 2, 3]$ \\
    \bottomrule
\end{tabular}
}
\end{center}
\end{table}

\begin{table}[t]
\centering
\caption{Hyperparameter selection of VGF in D4RL datasets.}
\label{tab_hyperparameters_d4rl}
\vspace{5pt}
\resizebox{0.7\linewidth}{!}{
\begin{tabular}{lcc} 
\toprule
Env                          & Train VGF steps $L_{\mathrm{train}}$ & VGF step size $\epsilon$  \\ 
\midrule
halfcheetah-medium-v2        & 3 & 0.05     \\
hopper-medium-v2             & 1 & 0.05      \\
walker2d-medium-v2           & 1 & 0.05   \\
halfcheetah-medium-replay-v2 & 3 & 0.05    \\
hopper-medium-replay-v2      & 1 & 0.05      \\
walker2d-medium-replay-v2    & 1 & 0.05      \\
halfcheetah-medium-expert-v2 & 3 & 0.05   \\
hopper-medium-expert-v2      & 1 & 0.05      \\
walker2d-medium-expert-v2    & 1 & 0.05   \\
\midrule
antmaze-umaze-v2             & 5 & 0.2    \\
antmaze-umaze-diverse-v2     & 5 & 0.2   \\
antmaze-medium-play-v2       & 5 & 0.2    \\
antmaze-medium-diverse-v2    & 5 & 0.1  \\
antmaze-large-play-v2        & 5 & 0.2    \\
antmaze-large-diverse-v2     & 5 & 0.2   \\
\bottomrule
\end{tabular}
}
\end{table}

\begin{table}[t]
\centering
\caption{Hyperparameter selection of VGF in OGBench datasets (offline and offline-to-online).}
\label{tab_hyperparameters_ogbench}
\vspace{5pt}
\resizebox{0.85\linewidth}{!}{
\begin{tabular}{lcc} 
\toprule
Env                          & Train VGF steps $L_{\mathrm{train}}$ & VGF step size $\epsilon$  \\ 
\midrule
antmaze-large-navigate-singletask-v0             & 5 & 0.1       \\
antmaze-giant-navigate-singletask-v0             & 5 & 0.1       \\
humanoidmaze-medium-navigate-singletask-v0         & 1 & 0.05   \\
humanoidmaze-large-navigate-singletask-v0 & 1 & 0.05    \\
antsoccer-arena-navigate-singletask-v0      & 2 & 0.05     \\
\midrule
cube-single-play-singletask-v0             & 1 & 0.05    \\
cube-double-play-singletask-v0     & 1 & 0.05    \\
scene-play-singletask-v0       & 1 & 0.1   \\
puzzle-3x3-play-singletask-v0    & 5 & 0.1  \\
puzzle-4x4-play-singletask-v0       & 5 & 0.1     \\
\bottomrule
\end{tabular}
}
\end{table}

% \begin{table}[th]
% \centering
% \caption{UNIVR hyperparameters in offline-to-online RL.}
% \label{tab_hyperparameters_offline2online}
% \vspace{5pt}
% \resizebox{0.45\linewidth}{!}{
% \begin{tabular}{l|rr} 
% \toprule
% Env                          & $\alpha$ & $\lambda$ \\ 
% \midrule
% pen-cloned-v1                & 1.0  & 0.01    \\
% door-cloned-v1                & 2.0  & 0.01    \\
% hammer-cloned-v1                & 2.0  & 0.01  \\
% relocate-cloned-v1                & 2.0  & 0.01   \\
% \midrule
% antmaze-medium-play-v2       & 0.5  & 0.01  \\
% antmaze-medium-diverse-v2    & 0.5  & 0.01 \\
% antmaze-large-play-v2        & 0.5  & 0.01  \\
% antmaze-large-diverse-v2     & 0.5  & 0.01  \\
% \bottomrule
% \end{tabular}
% }
% \end{table}

% DEFAULT TASK PERF

\setlength{\tabcolsep}{3pt}
\begin{table*}[t!]
\vspace{-10pt}

\caption{
\footnotesize
 {OGBench results (all tasks).} \methodname{} performs comparable or superior to the baselines on most tasks. \texttt{(*)} denotes the default task per environment~\citep{park2025flow}.}
\label{table:full_ogbench_results}
\centering
\vspace{-0.1cm}
\resizebox{0.93\linewidth}{!}
{
\begin{threeparttable}
\begin{tabular}{l ccc ccc | c}

\toprule

\multicolumn{1}{c}{} & \multicolumn{3}{c}{\texttt{Gaussian Policy}} & \multicolumn{3}{c}{\texttt{Diffusion/Flow Policy}} & \multicolumn{1}{c}{\texttt{w/o Policy (Ours)}} \\
\cmidrule(lr){2-4} \cmidrule(lr){5-7} \cmidrule(lr){8-8}
\texttt{Environment (5 tasks each)} & \texttt{BC} & \texttt{IQL} & \texttt{ReBRAC} & \texttt{FBRAC} &\texttt{IDQL} & \texttt{FQL} &\texttt{\methodname{}} \\

\midrule
% \texttt{antmaze-large-task1(*)}  &$0$ {\tiny $\pm 0$}& $91$ {\tiny $\pm 10$}  & $93$ {\tiny $\pm 2$} & $80$ {\tiny $\pm 8$}& $85$ {\tiny $\pm 4$}&  {$94$} {\tiny $\pm 4$} &  {$94$} {\tiny $\pm 4$}\\
% \texttt{antmaze-large-task2}  &$6$ {\tiny $\pm 3$}&  {$88$} {\tiny $\pm 4$}  & $79$ {\tiny $\pm 5$} & $57$ {\tiny $\pm 10$}& $64$ {\tiny $\pm 7$}& $82$ {\tiny $\pm 5$} & $82$ {\tiny $\pm 5$}\\
% \texttt{antmaze-large-task3}  &$29$ {\tiny $\pm 5$}& $51$ {\tiny $\pm 18$}  & $88$ {\tiny $\pm 10$} & $93$ {\tiny $\pm 3$}& $95$ {\tiny $\pm 2$}&  {$96$} {\tiny $\pm 4$} &  {$96$} {\tiny $\pm 4$}\\
% \texttt{antmaze-large-task4}  &$8$ {\tiny $\pm 3$}& $84$ {\tiny $\pm 7$}  & $91$ {\tiny $\pm 2$} & $80$ {\tiny $\pm 4$}& $86$ {\tiny $\pm 5$}&  {$90$} {\tiny $\pm 8$} &  {$90$} {\tiny $\pm 8$}\\
% \texttt{antmaze-large-task5}  &$10$ {\tiny $\pm 3$}& $90$ {\tiny $\pm 2$}  &  {$95$} {\tiny $\pm 0$} & $83$ {\tiny $\pm 4$}& $87$ {\tiny $\pm 5$}&  {$95$} {\tiny $\pm 4$} &  {$95$} {\tiny $\pm 4$}\\
% \midrule
\texttt{antmaze-giant-task1(*)}   &$0$ {\tiny $\pm 0$} & $0$ {\tiny $\pm 0$} & $27$ {\tiny $\pm 22$}  & $0$ {\tiny $\pm 1$} & $0$ {\tiny $\pm 0$} & $4$ {\tiny $\pm 5$} & {$0$} {\tiny $\pm 0$}\\
\texttt{antmaze-giant-task2}   &$0$ {\tiny $\pm 0$} & $1$ {\tiny $\pm 1$} & $16$ {\tiny $\pm 17$}  & $4$ {\tiny $\pm 7$} & $0$ {\tiny $\pm 0$} &  {$9$} {\tiny $\pm 7$}& $9$ {\tiny $\pm 3$}\\
\texttt{antmaze-giant-task3}   &$0$ {\tiny $\pm 0$} & $0$ {\tiny $\pm 0$} & $34$ {\tiny $\pm 22$}  & $0$ {\tiny $\pm 0$} & $0$ {\tiny $\pm 0$} & $0$ {\tiny $\pm 1$} & $0$ {\tiny $\pm 0$}\\
\texttt{antmaze-giant-task4}   &$0$ {\tiny $\pm 0$} & $0$ {\tiny $\pm 0$} & $5$ {\tiny $\pm 12$}  & $9$ {\tiny $\pm 4$} &  {$0$} {\tiny $\pm 0$} & $14$ {\tiny $\pm 23$}& $0$ {\tiny $\pm 0$}\\
\texttt{antmaze-giant-task5}   &$1$ {\tiny $\pm 1$} & $19$ {\tiny $\pm 7$} & $49$ {\tiny $\pm 22$}  & $6$ {\tiny $\pm 10$} & $0$ {\tiny $\pm 1$} & $16$ {\tiny $\pm 28$}& {$6$} {\tiny $\pm 2$}\\
\midrule
\texttt{hmmaze-medium-task1(*)}  &$1$ {\tiny $\pm 0$} &$32$ {\tiny $\pm 7$} & $16$ {\tiny $\pm 9$}  & $25$ {\tiny $\pm 8$} & $1$ {\tiny $\pm 1$}& $19$ {\tiny $\pm 12$}&  {$86$} {\tiny $\pm 1$}\\
\texttt{hmmaze-medium-task2}  &$1$ {\tiny $\pm 0$} &$41$ {\tiny $\pm 9$} & $18$ {\tiny $\pm 16$}  & $76$ {\tiny $\pm 10$} & $1$ {\tiny $\pm 1$}& $94$ {\tiny $\pm 3$}&  {$92$} {\tiny $\pm 2$}\\
\texttt{hmmaze-medium-task3}  &$6$ {\tiny $\pm 2$} &$25$ {\tiny $\pm 5$}& $36$ {\tiny $\pm 13$}  & $27$ {\tiny $\pm 11$} & $0$ {\tiny $\pm 1$}& $74$ {\tiny $\pm 18$}&  {$87$} {\tiny $\pm 2$}\\
\texttt{hmmaze-medium-task4}  &$0$ {\tiny $\pm 0$} &$0$ {\tiny $\pm 1$}& $15$ {\tiny $\pm 16$}  & $1$ {\tiny $\pm 2$} & $1$ {\tiny $\pm 1$}&  {$3$} {\tiny $\pm 4$}&  {$0$} {\tiny $\pm 0$}\\
\texttt{hmmaze-medium-task5}  &$2$ {\tiny $\pm 1$} &$66$ {\tiny $\pm 4$}& $24$ {\tiny $\pm 20$}  & $63$ {\tiny $\pm 9$} & $1$ {\tiny $\pm 1$}&  {$97$} {\tiny $\pm 2$}&  {$97$} {\tiny $\pm 0$}\\
\midrule
\texttt{hmmaze-large-task1(*)}   &$0$ {\tiny $\pm 0$}& $3$ {\tiny $\pm 1$} & $2$ {\tiny $\pm 1$} & $0$ {\tiny $\pm 1$} & $0$ {\tiny $\pm 0$}& $7$ {\tiny $\pm 6$}&  {$26$} {\tiny $\pm 4$}\\
\texttt{hmmaze-large-task2}   &$0$ {\tiny $\pm 0$}&$0$ {\tiny $\pm 0$}& $0$ {\tiny $\pm 0$}  & $0$ {\tiny $\pm 0$} & $0$ {\tiny $\pm 0$}& $0$ {\tiny $\pm 0$}&  {$2$} {\tiny $\pm 1$}\\
\texttt{hmmaze-large-task3}   &$1$ {\tiny $\pm 1$}&$7$ {\tiny $\pm 3$}& $8$ {\tiny $\pm 4$}  & $10$ {\tiny $\pm 2$} & $3$ {\tiny $\pm 1$}& $11$ {\tiny $\pm 7$}&  {$12$} {\tiny $\pm 6$}\\
\texttt{hmmaze-large-task4}   &$1$ {\tiny $\pm 0$}&$1$ {\tiny $\pm 0$}& $1$ {\tiny $\pm 1$}  & $0$ {\tiny $\pm 0$} & $0$ {\tiny $\pm 0$}&  {$2$} {\tiny $\pm 3$}& $2$ {\tiny $\pm 1$}\\
\texttt{hmmaze-large-task5}   &$0$ {\tiny $\pm 1$}&$1$ {\tiny $\pm 1$}& $2$ {\tiny $\pm 2$}  & $1$ {\tiny $\pm 1$} & $0$ {\tiny $\pm 0$}& $1$ {\tiny $\pm 3$}&  {$31$} {\tiny $\pm 5$}\\
\midrule
\texttt{antsoccer-arena-task1}  &$2$ {\tiny $\pm 1$}&$14$ {\tiny $\pm 5$} & $0$ {\tiny $\pm 0$}  &  {$17$} {\tiny $\pm 3$} & $44$ {\tiny $\pm 12$}& $77$ {\tiny $\pm 4$}& $76$ {\tiny $\pm 3$}\\
\texttt{antsoccer-arena-task2}  &$2$ {\tiny $\pm 2$}&$17$ {\tiny $\pm 7$} & $0$ {\tiny $\pm 1$}  &  $8$ {\tiny $\pm 2$} & $15$ {\tiny $\pm 12$}& $88$ {\tiny $\pm 3$}&  {$75$} {\tiny $\pm 2$}\\
\texttt{antsoccer-arena-task3}  &$0$ {\tiny $\pm 0$}&$6$ {\tiny $\pm 4$} & $0$ {\tiny $\pm 0$}  &  {$16$} {\tiny $\pm 3$} & $0$ {\tiny $\pm 0$}& $61$ {\tiny $\pm 6$}& $41$ {\tiny $\pm 9$}\\
\texttt{antsoccer-arena-task4(*)}  &$1$ {\tiny $\pm 0$}&$3$ {\tiny $\pm 2$} & $0$ {\tiny $\pm 0$}  &  {$24$} {\tiny $\pm 4$} & $0$ {\tiny $\pm 1$}& $39$ {\tiny $\pm 6$}& $58$ {\tiny $\pm 8$}\\
\texttt{antsoccer-arena-task5}  &$0$ {\tiny $\pm 0$}&$2$ {\tiny $\pm 2$} & $0$ {\tiny $\pm 0$}  &  {$15$} {\tiny $\pm 4$} & $0$ {\tiny $\pm 0$}& $36$ {\tiny $\pm 9$}& $64$ {\tiny $\pm 8$}\\
\midrule
\texttt{cube-single-task1}  &$10$ {\tiny $\pm 5$}&$88$ {\tiny $\pm 3$}& $89$ {\tiny $\pm 5$}  & $73$ {\tiny $\pm 33$}& $95$ {\tiny $\pm 2$}& $97$ {\tiny $\pm 2$}&  {$98$} {\tiny $\pm 2$}\\
\texttt{cube-single-task2(*)}  &$3$ {\tiny $\pm 1$}&$85$ {\tiny $\pm 8$}& $92$ {\tiny $\pm 4$}  &  {$83$} {\tiny $\pm 13$}& $96$ {\tiny $\pm 2$}& $97$ {\tiny $\pm 2$}&  {$100$} {\tiny $\pm 0$}\\
\texttt{cube-single-task3}  &$9$ {\tiny $\pm 3$}&$91$ {\tiny $\pm 5$}& $93$ {\tiny $\pm 3$}  &  {$82$} {\tiny $\pm 12$}& $99$ {\tiny $\pm 1$}& $98$ {\tiny $\pm 2$}& $100$ {\tiny $\pm 0$}\\
\texttt{cube-single-task4}  &$2$ {\tiny $\pm 1$}&$73$ {\tiny $\pm 6$}& $92$ {\tiny $\pm 3$}  & $79$ {\tiny $\pm 20$}& $93$ {\tiny $\pm 4$}& $94$ {\tiny $\pm 3$}&  {$94$} {\tiny $\pm 3$}\\
\texttt{cube-single-task5}  &$3$ {\tiny $\pm 3$}&$78$ {\tiny $\pm 9$}& $87$ {\tiny $\pm 8$}  & $76$ {\tiny $\pm 33$}& $90$ {\tiny $\pm 6$}& $93$ {\tiny $\pm 3$}&  {$92$} {\tiny $\pm 4$}\\
\midrule
\texttt{cube-double-task1}    &$8$ {\tiny $\pm 3$}&$27$ {\tiny $\pm 5$}& $45$ {\tiny $\pm 6$} &  {$47$} {\tiny $\pm 11$} & $39$ {\tiny $\pm 19$}& $61$ {\tiny $\pm 9$}& $95$ {\tiny $\pm 6$}\\
\texttt{cube-double-task2(*)}    &$0$ {\tiny $\pm 0$}&$1$ {\tiny $\pm 1$}& $7$ {\tiny $\pm 3$} & $22$ {\tiny $\pm 12$} & $16$ {\tiny $\pm 10$}& $36$ {\tiny $\pm 6$}&  {$78$} {\tiny $\pm 10$}\\
\texttt{cube-double-task3}    &$0$ {\tiny $\pm 0$}&$0$ {\tiny $\pm 0$}& $4$ {\tiny $\pm 1$} & $4$ {\tiny $\pm 2$} & $17$ {\tiny $\pm 8$}& $22$ {\tiny $\pm 5$}&  {$66$} {\tiny $\pm 8$}\\
\texttt{cube-double-task4}    &$0$ {\tiny $\pm 0$}&$0$ {\tiny $\pm 0$}& $1$ {\tiny $\pm 1$} & $0$ {\tiny $\pm 1$} & $0$ {\tiny $\pm 1$}& $5$ {\tiny $\pm 2$}&  {$31$} {\tiny $\pm 4$}\\
\texttt{cube-double-task5}    &$0$ {\tiny $\pm 0$}&$4$ {\tiny $\pm 3$}& $4$ {\tiny $\pm 2$} & $2$ {\tiny $\pm 2$} & $1$ {\tiny $\pm 1$}& $19$ {\tiny $\pm 10$}&  {$78$} {\tiny $\pm 11$}\\
\midrule
\texttt{scene-task1} &$19$ {\tiny $\pm 6$}&$94$ {\tiny $\pm 3$}& $95$ {\tiny $\pm 2$}  & $96$ {\tiny $\pm 8$} &  {$100$} {\tiny $\pm 0$}& $100$ {\tiny $\pm 0$}&  {$100$} {\tiny $\pm 0$}\\
\texttt{scene-task2(*)} &$1$ {\tiny $\pm 1$}&$12$ {\tiny $\pm 3$}& $50$ {\tiny $\pm 13$}  &  {$46$} {\tiny $\pm 10$} & $33$ {\tiny $\pm 14$}& $76$ {\tiny $\pm 9$}& $96$ {\tiny $\pm 2$}\\
\texttt{scene-task3} &$1$ {\tiny $\pm 1$}&$32$ {\tiny $\pm 7$}& $55$ {\tiny $\pm 16$}  & $78$ {\tiny $\pm 14$} &  {$94$} {\tiny $\pm 4$}& $98$ {\tiny $\pm 1$}&  {$98$} {\tiny $\pm 2$}\\
\texttt{scene-task4} &$2$ {\tiny $\pm 2$}&$0$ {\tiny $\pm 1$}& $3$ {\tiny $\pm 3$}  & $4$ {\tiny $\pm 4$} & $4$ {\tiny $\pm 3$}&  {$5$} {\tiny $\pm 1$}& $2$ {\tiny $\pm 2$}\\
\texttt{scene-task5} &$0$ {\tiny $\pm 0$}&$0$ {\tiny $\pm 0$}& $0$ {\tiny $\pm 0$}  & $0$ {\tiny $\pm 0$} & $0$ {\tiny $\pm 0$}& $0$ {\tiny $\pm 0$}& $2$ {\tiny $\pm 1$}\\
\midrule
\texttt{puzzle-3x3-task1}   &$5$ {\tiny $\pm 2$}&$33$ {\tiny $\pm 6$}&  {$97$} {\tiny $\pm 4$}& $63$ {\tiny $\pm 19$}& $52$ {\tiny $\pm 12$}& $90$ {\tiny $\pm 4$}& $100$ {\tiny $\pm 0$}\\
\texttt{puzzle-3x3-task2}   &$1$ {\tiny $\pm 1$}&$4$ {\tiny $\pm 3$}& $1$ {\tiny $\pm 1$}& $2$ {\tiny $\pm 2$}& $0$ {\tiny $\pm 1$}& $16$ {\tiny $\pm 5$}&  {$71$} {\tiny $\pm 6$}\\
\texttt{puzzle-3x3-task3}   &$1$ {\tiny $\pm 1$}&$3$ {\tiny $\pm 2$}& $3$ {\tiny $\pm 1$}& $1$ {\tiny $\pm 1$}& $0$ {\tiny $\pm 0$}& $10$ {\tiny $\pm 3$}&  {$62$} {\tiny $\pm 7$}\\
\texttt{puzzle-3x3-task4(*)}   &$1$ {\tiny $\pm 1$}&$2$ {\tiny $\pm 1$}& $2$ {\tiny $\pm 1$}& $2$ {\tiny $\pm 2$}& $0$ {\tiny $\pm 0$}& $16$ {\tiny $\pm 5$}&  {$65$} {\tiny $\pm 6$}\\
\texttt{puzzle-3x3-task5}   &$1$ {\tiny $\pm 0$}&$3$ {\tiny $\pm 2$}& $5$ {\tiny $\pm 3$}& $2$ {\tiny $\pm 2$}& $0$ {\tiny $\pm 0$}& $16$ {\tiny $\pm 3$}&  {$76$} {\tiny $\pm 6$}\\
\midrule
\texttt{puzzle-4x4-task1}   &$1$ {\tiny $\pm 1$}&$12$ {\tiny $\pm 2$} & $26$ {\tiny $\pm 4$} & $32$ {\tiny $\pm 9$}& $48$ {\tiny $\pm 5$}& $34$ {\tiny $\pm 8$}& $85$ {\tiny $\pm 8$ }\\
\texttt{puzzle-4x4-task2}   &$0$ {\tiny $\pm 0$}&$7$ {\tiny $\pm 4$} & $12$ {\tiny $\pm 4$} & $5$ {\tiny $\pm 3$}& $14$ {\tiny $\pm 5$}& $16$ {\tiny $\pm 5$}& $26$ {\tiny $\pm 3$ }\\
\texttt{puzzle-4x4-task3}   &$0$ {\tiny $\pm 0$}&$9$ {\tiny $\pm 3$} & $15$ {\tiny $\pm 3$} & $20$ {\tiny $\pm 10$}& $34$ {\tiny $\pm 5$}& $18$ {\tiny $\pm 5$}& $65$ {\tiny $\pm 6$ }\\
\texttt{puzzle-4x4-task4(*)}   &$0$ {\tiny $\pm 0$}&$5$ {\tiny $\pm 2$} & $10$ {\tiny $\pm 3$} & $5$ {\tiny $\pm 1$}& $26$ {\tiny $\pm 6$}& $11$ {\tiny $\pm 3$}& $33$ {\tiny $\pm 6$ }\\
\texttt{puzzle-4x4-task5}   &$0$ {\tiny $\pm 0$}&$4$ {\tiny $\pm 1$} & $7$ {\tiny $\pm 3$} & $4$ {\tiny $\pm 3$}& $24$ {\tiny $\pm 11$}& $7$ {\tiny $\pm 3$}& $18$ {\tiny $\pm 2$ }\\
\midrule
\end{tabular}
\end{threeparttable}
}
\end{table*}

\section{Psedocode of VGF}
\label{sec_code}
\begin{figure}[ht]
\centering
\begin{spacing}{0.9}
\begin{python}
import jax
import jax.numpy as jnp

def rbf_kernel(X, Y, sigma=None):
    """X: [B, n, d], Y: [B, m, d], returns K_XY: [B, n, m]"""
    X2 = jnp.sum(X * X, axis=-1, keepdims=True)                                 
    Y2 = jnp.sum(Y * Y, axis=-1, keepdims=True).transpose(0, 2, 1)              
    XY = jnp.matmul(X, Y.transpose(0, 2, 1))                                    
    dnorm2 = X2 + Y2 - 2.0 * XY                                                 
    dnorm2 = jnp.maximum(dnorm2, 0.0)
    if sigma is None:
        # median heuristic per batch
        h = jnp.median(dnorm2, axis=(1,2)) 
        h /= (2.0 * jnp.log(X.shape[1] + 1.0)) 
        sigma_val = jnp.sqrt(jnp.maximum(h, 1e-12))                            
        sigma_val = sigma_val[:, None, None]
    else:
        sigma_val = jnp.asarray(sigma)
        if sigma_val.ndim == 0:
            sigma_val = jnp.broadcast_to(sigma_val, (X.shape[0], 1, 1))
    gamma = 1.0 / (1e-6 + 2.0 * (sigma_val ** 2))
    K_XY = jnp.exp(-gamma * dnorm2)                                            
    return K_XY

class VGF:
    def __init__(self, q, alpha, optimizer):
        self.q = q
        self.alpha = alpha
        self.optim = optimizer
        self.opt_state = None

    def init(self, particles):
        self.opt_state = self.optim.init(particles)
        return particles, self.opt_state

    def phi(self, obs, particles):
        # obs: [B, D], particles: [B, N, D]
        # score terms
        def sum_q(action):
            obs_flatten = obs.reshape(-1, obs.shape[-1])                   
            action_flatten = action.reshape(-1, action.shape[-1])        
            qs = self.q(obs_flatten, action_flatten)
            q = jnp.mean(qs, axis=0)
            return jnp.sum(q)
        score = jax.grad(sum_q)(particles)                    
        # kernel terms
        particles_stop = jax.lax.stop_gradient(particles)
        K_xx = rbf_kernel(particles, particles_stop)        
        def sum_K(x):
            return jnp.sum(rbf_kernel(x, particles_stop))
        grad_q = jax.lax.stop_gradient(K_xx) @ score
        grad_K = -jax.grad(sum_K)(particles)                       
        phi_val = (grad_q / self.alpha + grad_K) / particles.shape[1]
        return phi_val      

    def step(self, obs, particles, opt_s):
        grads = self.phi(obs, particles)
        updates, new_opt_s = self.optim.update(-grads, opt_s, particles)
        new_particles = optax.apply_updates(particles, updates)
        return new_particles, new_opt_s

\end{python}
\end{spacing}
\caption{A simple implementation of the VGF process.}
\label{fig:code}
\end{figure}

\end{document}